\documentclass[10pt]{article}
\usepackage{times}

\usepackage{microtype}
\usepackage{graphicx}
\usepackage{subfigure}
\usepackage{booktabs}

\usepackage{color}

\usepackage{algorithm}
\usepackage{algorithmic}

\usepackage{authblk}

\usepackage{tabu}
\usepackage{url}

\setlength{\textwidth}{6.5in}
\setlength{\textheight}{9in}
\setlength{\oddsidemargin}{0in}
\setlength{\evensidemargin}{0in}
\setlength{\topmargin}{-0.5in}

\newlength{\defbaselineskip}
\setlength{\defbaselineskip}{\baselineskip}
\setlength{\marginparwidth}{0.8in}

\usepackage{etoolbox}
\newtoggle{arxiv}
\toggletrue{arxiv}

\usepackage{cdesa-math}

\usepackage{mathtools}
\usepackage[dvipsnames]{xcolor}

\usepackage[numbers,sort]{natbib}

\usepackage{hyperref}

\newlength{\myfcwidth}
\setlength{\myfcwidth}{0.6\linewidth}

\begin{document}

\title{Minibatch Gibbs Sampling on Large Graphical Models}

\author[$\dagger$]{Christopher De Sa}
\author[$\dagger$]{Vincent Chen}
\author[$\ddagger$]{Wing Wong}
\affil[$\dagger$]{Department of Computer Science, Cornell University}
\affil[$\ddagger$]{Department of Statistics, Stanford University\vspace{4pt}}
\affil[ ]{\texttt{cdesa@cs.cornell.edu}, \texttt{zc346@cornell.edu}, \texttt{whwong@stanford.edu}}

\maketitle

\begin{abstract}
Gibbs sampling is the de facto Markov chain Monte Carlo method used for inference and learning on large scale graphical models.
For complicated factor graphs with lots of factors, the performance of Gibbs sampling can be limited by the computational cost of executing a single update step of the Markov chain.
This cost is proportional to the degree of the graph, the number of factors adjacent to each variable.
In this paper, we show how this cost can be reduced by using minibatching: subsampling the factors to form an estimate of their sum.
We introduce several minibatched variants of Gibbs, show that they can be made unbiased, prove bounds on their convergence rates, and show that under some conditions they can result in asymptotic single-update-run-time speedups over plain Gibbs sampling.

\end{abstract}

\section{Introduction}
\label{secIntro}

Gibbs sampling is a Markov chain Monte Carlo method that is one of the most widespread techniques used with graphical models~\cite{koller2009probabilistic}.
Gibbs sampling is an iterative method that repeatedly resamples a variable in the model from its conditional distribution, a process that is guaranteed to converge asymptotically to the desired distribution.
Since these updates are typically simple and fast to run, Gibbs sampling can be applied to a variety of problems, and has been used for inference on large-scale graphical models in many systems~\cite{newman2007distributed,lunn2009bugs,
mccallum2009factorie,smola2010architecture,NIPS2012_4832,zhang2014dimmwitted}.

Unfortunately, for large graphical models with many factors, the computational cost of running an iteration of Gibbs sampling can become prohibitive.
Even though Gibbs sampling is a \emph{graph-local} algorithm, in the sense that each update only needs to reference data associated with a local neighborhood of the factor graph, as graphs become large and highly connected, even these local neighborhoods can become huge.
To make matters worse, complicated models tend to have categorical variables with large domains: for example, the variables can represent US States or ZIP codes, where there are dozens to thousands of possible choices.
The cost of computing a single iteration of Gibbs sampling is proportional to the \emph{product} of the size of the local neighborhood and the size of the domain of the categorical random variables.
\cmd{More explicitly, if the \emph{maximum degree} of a variable in the factor graph is $\Delta$, and each variable can take on $D$ possible values, then a single Gibbs sampling update takes $O(D \Delta)$ time to compute in the worst case.}

\begin{table*}[t]
\caption{Single-iteration computational complexity bounds of our minibatching algorithms compared with ordinary Gibbs sampling, for parameter settings which can slow down convergence, as measured by the spectral gap, by no more than a $O(1)$ factor.}
\label{tabSummary}
\vskip 0.15in
\begin{center}
\begin{tabular}{lll}
\toprule
\textbf{Algorithm} & \textbf{Compute Cost/Iteration} & \textbf{Notes} \\
\midrule
Gibbs sampling & $O(D \Delta)$ & \\
\midrule
MIN-Gibbs: Minibatch Gibbs & $O(D \Psi^2)$ & with high probability \\
MGPMH: Minibatch-Gibbs-Proposal Metropolis Hastings & $O(D L^2 + \Delta)$ & \\
DoubleMIN-Gibbs: Doubly Minibatch Gibbs & $O(D L^2 + \Psi^2)$ & with high probability \\
\bottomrule
\end{tabular}
\end{center}
\vskip -0.1in
\end{table*}

This effect limits the scalability of Gibbs sampling.
What started as a fast, efficient system can become dramatically slower over the course of development as models become more complicated, new factors are added, and new values are included in the variables' domains.
To address this, practitioners have developed techniques such as \emph{pruning}~\cite{rekatsinas2017holoclean} which improve scalability by making the factor graphs simpler.
While this does make Gibbs sampling faster, it comes at the cost of introducing bias which reduces fidelity to the original model.

In this paper, we explore a more principled approach to making Gibbs sampling scalable.
Our approach is inspired by \emph{minibatching} in stochastic gradient descent, which has been used with great success to scale up machine learning training.
The main idea is that, when a Gibbs sampling update depends on a large local graph neighborhood, we can instead \emph{randomly subsample} the neighborhood.
Supposing the random sample is representative of the rest of the neighborhood, we can proceed to perform the update using just the random sample, rather than the (much larger) entire neighborhood.
In this paper, we study minibatching for Gibbs Sampling, and we make the following contributions:
\begin{itemize}
  \itemsep2pt 
  \item We introduce techniques for \emph{minibatching} which can dramatically reduce the cost of running Gibbs sampling, while provably adding \emph{no bias} to the samples.
  \item We prove bounds on the convergence rates of our minibatch Gibbs algorithms, as measured by the \emph{spectral gap} of the Markov chain. We give a recipe for how to set the algorithms' parameters so that the spectral gap of minibatch Gibbs can be bounded to be arbitrarily close to the spectral gap of the original Gibbs chain.
  \item For a class of graphs with bounded energy, we show doing this results in an asymptotic computation cost of $O(D + \Delta)$, a substantial speedup from the cost of Gibbs sampling which is $O(D \Delta)$.
\end{itemize}

\subsection{Background and Definitions}
\label{secSetup}

First, to make our claims rigorous, we describe our setting by defining a factor graph and the various conditions and conventions we will be using in the paper.
A factor graph~\cite{koller2009probabilistic} with $n$ variables determines a probability distribution $\pi$ over a state space $\Omega$.
In general, a state $x \in \Omega$ is an assignment of a value to each of the $n$ variables, and each variable has its own domain over which it can take on values.
For simplicity, in this paper we will assume all variables take on values in the same domain $\{1, \ldots, D\}$ for some constant $D$, which would make $\Omega$ the set of functions $x: \{1,\ldots,n\} \rightarrow \{1,\ldots,D\}$.\footnote{\cmd{This means that the state space $\Omega$ is discrete, and we will be focusing on discrete-valued models in this paper. Continuous-valued models could be approximated with our algorithms by discretizing their domains to any desired level of accuracy.}}
The probability distribution $\pi$ is determined as the product of some set of factors $\Phi$, such that for any $x \in \Omega$,
\[
  \textstyle
  \pi(x) \propto \exp\left( \sum_{\phi \in \Phi} \phi(x) \right);
\]
this is sometimes called the \emph{Gibbs measure}. In this document, we use the notation $\rho(v) \propto \exp(\epsilon_v)$ to mean the unique distribution that is proportional to $\exp(\epsilon_v)$, that is,
\[
  \rho(v) = \frac{\exp(\epsilon_v)}{\sum_{w = 1}^D \exp(\epsilon_w)}.
\]
Equivalently, we can define an \emph{energy function} $\zeta$, where
\[
  \textstyle
  \zeta(x) = \sum_{\phi \in \Phi} \phi(x),
\]
and let $\pi(x) \propto \exp(\zeta(x))$.
\cmd{(Note that this definition implicitly excludes models with hard constraints or zero-probability states.)}
This is called a factor graph because the factors $\phi$ and variables $i$ have a dependency relation in that each $\phi$ depends only on the values of some of the variables, but typically not all $n$.
Formally, a factor $\phi$ depends on a variable if changing the value of that variable could change the value of $\phi$---meaning, if there exists states $x$ and $y$ which differ only in variable $i$ and for which $\phi(x) \ne \phi(y)$.
Using this, we define the relation
\[
  A = \left\{ (i, \phi) \middle| \text{factor } \phi \text{ depends on variable } i \right\}.
\]
Equivalently, this is a bipartite graph on the variables and factors; this edge relation, together with the factor function descriptions, defines a factor graph.
Using this, we formally describe Gibbs sampling, which we write in Algorithm~\ref{algGibbs}.
Note that we use the notation $x(i) \leftarrow u$ to denote variable $i$ within state $x$ being reassigned the value $u$, and following standard relation notation we let $A[i] = \{ \phi | (i, \phi) \in A \}$ be the set of factors that depend on variable $i$.

\begin{algorithm}[t]
  \caption{Gibbs sampling}
    \begin{algorithmic}
    \label{algGibbs}
    \STATE \textbf{given:} initial state $x \in \Omega$
    \LOOP
      \STATE \textbf{sample} variable index $i$ uniformly from $\{1,\ldots,n\}$
      \FORALL{$u$ \textbf{in} $\{1, \ldots, D\}$}
        \STATE $x(i) \leftarrow u$
        \STATE $\epsilon_u \leftarrow \sum_{\gamma \in A[i]} \phi(x)$
      \ENDFOR
      \STATE construct distribution $\rho$ over $\{1, \ldots, D\}$ where
      \[
        \textstyle
        \rho(v) \propto \exp(\epsilon_v)
      \]
      \STATE \textbf{sample} $v$ from $\rho$.
      \STATE $x(i) \leftarrow v$
      \STATE \textbf{output sample} $x$
    \ENDLOOP
  \end{algorithmic}
  \end{algorithm}

Next, we describe several conditions on the factor functions that we will be using throughout this paper.
For all that follows, we will suppose that $\phi(x) \ge 0$ for all $\phi$ and for all $x$; this holds without \cmd{loss of generality (for models without hard constraints)} because adding a constant to a factor function $\phi$ does not change the resulting distribution.
\begin{definition}[Factor graph conditions]
  \label{defnFGC}
  The \emph{maximum energy} of a factor $\phi$ is the smallest $M_{\phi} \in \R$ such that,
  \[
    0 \le \phi(x) \le M_{\phi} \text{  for all $x \in \Omega$}.
  \]
  The \emph{total maximum energy} $\Psi$ of a graph is the sum of all the maximum energies
  \[
    \textstyle
    \Psi = \sum_{\phi \in \Phi} M_{\phi}.
  \]
  The \emph{local maximum energy} $L$ of a graph is the largest sum of all the maximum energies of factors that depend on a single variable $i$,
  \[
    \textstyle
    L = \max_{i \in \{1, \ldots, n\}} \sum_{\phi \in A[i]} M_{\phi}.
  \]
  The \emph{maximum degree} $\Delta$ of a factor graph is the largest number of factors that are connected to a variable,
  \[
    \textstyle
    \Delta = \max_{i \in \{1, \ldots, n\}} \Abs{ A[i] }.
  \]
\end{definition}
For large models with many low-energy factors, the total maximum energy can be much smaller than the number of factors, and the local maximum energy can be much smaller than the maximum degree.
We will introduce several minibatch Gibbs variants in this paper, and their relative computation costs, in terms of the quantities defined in Definition~\ref{defnFGC}, are summarized in Table~\ref{tabSummary}.

\subsection{Related Work}
\label{secRelatedWork}

Several recent papers have explored applying minibatching to speed up large-scale Markov chain Monte Carlo methods.
Our work is inspired by \citet{li2017mini}, which shows how minibatching can be applied to the Metropolis-Hastings algorithm~\cite{hastings1970monte}.
They show that their method, MINT-MCMC, can outperform existing techniques that use gradient information, such as stochastic gradient descent and stochastic gradient Langevin dynamics, on large-scale Bayesian neural network tasks.
This is one among several papers that have also showed how to use minibatching to speed up Metropolis-Hastings under various conditions \cite{korattikara2014austerity,bardenet2014towards,chen2016efficient}.
\cmd{More general related approaches include Firefly MC, which implements a minibatching-like computational pattern using auxiliary variables \cite{maclaurin2014firefly}, and block-Poisson estimation, which constructs unbiased log-likelihood estimates using Poisson variables~\cite{quiroz2018mcmc}.}
Our results are distinct from this line of work in that we are the first to study minibatch Gibbs sampling in depth\footnote{\citet{
johndrow2015approximations} does evaluate a version of a minibatched Gibbs sampler on a particular application, but does not study Gibbs in depth.}, and we are the first to address the effect of categorical random variables with large domains that can limit computational tractability.
Additionally, none of these papers provides any theoretical bounds on the convergence rates of their minibatched algorithm compared to the original MCMC chain.  

\section{MIN-Gibbs}

\begin{algorithm}[t]
  \caption{MIN-Gibbs: Minibatch Gibbs sampling}
    \begin{algorithmic}
    \label{algMINTGibbs}
    \STATE \textbf{given:} minibatch estimator distributions $\mu_x$ for $x \in \Omega$
    \STATE \textbf{given:} initial state $(x, \epsilon) \in \Omega \times \R$
    \LOOP
      \STATE \textbf{sample} variable index $i$ uniformly from $\{1,\ldots,n\}$
      \STATE $\epsilon_{x(i)} \leftarrow \epsilon$
      \FORALL{$u$ \textbf{in} $\{1, \ldots, D\} \setminus \{x(i)\}$}
        \STATE $x(i) \leftarrow u$
        \STATE \textbf{sample} energy $\epsilon_u$ from $\mu_x$
      \ENDFOR
      \STATE construct distribution $\rho$ over $\{1, \ldots, D\}$ where
      \[
        \rho(v) \propto \exp(\epsilon_v)
      \]
      \STATE \textbf{sample} $v$ from $\rho$.
      \STATE $x(i) \leftarrow v$
      \STATE $\epsilon \leftarrow \epsilon_v$
      \STATE \textbf{output sample} $(x, \epsilon)$
    \ENDLOOP
  \end{algorithmic}
  \end{algorithm}

In this section, we will present our first algorithm for applying minibatching to Gibbs sampling.
As there are many options when doing minibatching\footnote{These options include: what the batch size is, whether the batch size is fixed or random, whether to do weighted sampling, whether to sample with replacement, etc.} we will present our algorithm in terms of a general framework for Gibbs sampling in which we use an estimate for the energy rather than computing the energy exactly.
Specifically, we imagine the result of minibatching on some state $x \in \Omega$ is a random variable $\epsilon_x$ such that
\[
    \epsilon_x \approx \sum_{\phi \in \Phi} \phi(x).
\]
For example, we could assign $\epsilon_x$ as
\[
    \epsilon_x = \frac{|\Phi|}{B} \sum_{\phi \in S} \phi(x)
\]
where $S$ is a randomly chosen subset of $\Phi$ of size $B$.
More formally, we let $\mu_x$ be the distribution of $\epsilon_x$, and we assume our algorithm will have access to $\mu_x$ for every state $x$ and can draw samples from it.
For simplicity, we assume $\mu_x$ has finite support, which is true for any minibatch estimator.

We can now replace the exact sums in the Gibbs sampling algorithm with our approximations $\epsilon_x$.
If sampling from $\mu_x$ is easier than computing this sum, this will result in a faster algorithm than vanilla Gibbs sampling.
There is one further optimization: rather than re-estimating the energy of the current state $x$ at each iteration of our algorithm, instead we can \emph{cache} its value as it was computed in the previous step.
Doing this results in Algorithm~\ref{algMINTGibbs}, MIN-Gibbs.
We call our algorithm MIN-Gibbs because it was inspired by the Mini-batch Tempered MCMC (MINT-MCMC) algorithm presented by \citet{li2017mini} for applying minibatching to Metropolis-Hastings, another popular MCMC algorithm.
MINT-MCMC also used the idea of caching the energy to form an augmented system with state space $\Omega \times \R$.
Compared with their algorithm, our contributions are that: (1) we modify the technique to apply to Gibbs sampling; (2) we show that \emph{tempering} (which is sampling from a modified distribution at a higher temperature) and other sources of bias can be circumvented by using a bias-adjusted minibatching scheme; and (3) we prove bounds on the convergence rate of our technique.

Because it uses an energy estimate rather than the true energy, this algorithm is not equivalent to standard Gibbs sampling.
This raises the question: will it converge to the same distribution $\pi$, and if not, what will it converge to?
This question can be answered by using the property of \emph{reversibility} also known as \emph{detailed balance}.

\begin{definition}
  A Markov chain with transition matrix $T$ is \emph{reversible} if for some distribution $\bar \pi$ and all states $x, y \in \Omega$,
  \[
    \bar \pi(x) T(x, y) = \bar \pi(y) T(y,x).
  \]
\end{definition}

It is a well known result that if a chain $T$ is reversible, the distribution $\bar \pi$ will be a stationary distribution of $T$, that is, $\bar \pi T = \bar \pi$.
Thus we can use reversibility to determine the stationary distribution of a chain, and we do so for Algorithm~\ref{algMINTGibbs} in the following theorem.

\begin{theorem}
  \label{thmMINTreversible}
  The Markov chain described in Algorithm~\ref{algMINTGibbs} is reversible and has stationary distribution
  \[
    \bar \pi(x, \epsilon) \propto \mu_x(\epsilon) \cdot \exp(\epsilon)
  \]
  and its marginal stationary distribution in $x$ will be
  \[
    \bar \pi(x) \propto \Exv[\epsilon \sim \mu_x]{\exp(\epsilon)}.
  \]
\end{theorem}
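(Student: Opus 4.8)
The plan is to verify detailed balance directly --- show that $\bar\pi(x,\epsilon)\,T\big((x,\epsilon),(y,\delta)\big) = \bar\pi(y,\delta)\,T\big((y,\delta),(x,\epsilon)\big)$ for the claimed $\bar\pi$ --- and then invoke the standard fact, already quoted in the text, that reversibility implies stationarity. Since one update of the chain changes only a single coordinate, both sides vanish unless $x = y$ or $x$ and $y$ agree everywhere except at one index $i$; in the latter case only the event ``index $i$ is selected'' (probability $1/n$) contributes on each side, and that factor cancels. It also suffices to restrict to the finite effective state space $\{(x,\epsilon) : \mu_x(\epsilon) > 0\}$, on which $\bar\pi$ is clearly normalizable (this also disposes of the degenerate states that the cached energy could in principle produce).

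So fix $i$, write $x^u$ for the state obtained from $x$ by resetting coordinate $i$ to $u$, and set $a = x(i)$, $b = y(i)$ with $a \ne b$, so $x = x^a$ and $y = x^b$. The key step is to expand $T\big((x^a,\alpha),(x^b,\beta)\big)$ by summing over the auxiliary energies $\epsilon_u$ for $u \notin \{a,b\}$, which the algorithm draws from $\mu_{x^u}$ (evaluated at the state $x^u$ actually reached) and then marginalizes away: with $\epsilon_a := \alpha$ cached and $\epsilon_b := \beta$ required, the final draw picks $v = b$ with probability $\exp(\beta)/\big(\exp(\alpha)+\exp(\beta)+\sum_{u \notin \{a,b\}}\exp(\epsilon_u)\big)$. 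Multiplying by $\bar\pi(x^a,\alpha) \propto \mu_{x^a}(\alpha)\exp(\alpha)$ and collecting terms yields, up to the global normalizing constant of $\bar\pi$ and the factor $1/n$,
\[
  \bar\pi(x^a,\alpha)\,T\big((x^a,\alpha),(x^b,\beta)\big) \;\propto\; \mu_{x^a}(\alpha)\exp(\alpha)\,\mu_{x^b}(\beta)\exp(\beta) \sum_{(\epsilon_u)_{u \notin \{a,b\}}} \frac{\prod_{u \notin \{a,b\}} \mu_{x^u}(\epsilon_u)}{\exp(\alpha)+\exp(\beta)+\sum_{u \notin \{a,b\}}\exp(\epsilon_u)}.
\]
This expression is manifestly invariant under the swap $(a,\alpha) \leftrightarrow (b,\beta)$: the prefactor $\mu_{x^a}(\alpha)\exp(\alpha)\mu_{x^b}(\beta)\exp(\beta)$ is symmetric, and in the remaining sum the index set $\{u : u \notin \{a,b\}\}$, the measures $\mu_{x^u}$, and the denominator (which sees $\alpha,\beta$ only through $\exp(\alpha)+\exp(\beta)$) are all unchanged. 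Hence the left side equals $\bar\pi(x^b,\beta)\,T\big((x^b,\beta),(x^a,\alpha)\big)$, which is detailed balance; the sub-case $v = a$ simply leaves the state fixed and is consistent, and $x = y$ is trivial.

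Reversibility then gives $\bar\pi T = \bar\pi$, and the marginal claim follows by summing out $\epsilon$: $\bar\pi(x) = \sum_\epsilon \bar\pi(x,\epsilon) \propto \sum_\epsilon \mu_x(\epsilon)\exp(\epsilon) = \Exv[\epsilon \sim \mu_x]{\exp(\epsilon)}$. I do not anticipate a real obstacle; the one thing to handle carefully is the bookkeeping of the marginalized energies $\epsilon_u$, together with the observation that is the heart of the argument --- the cached factor $\exp(\epsilon)$ inside $\bar\pi$ is precisely what is needed to turn the asymmetric selection probability $\rho(v)$ into a symmetric quantity.
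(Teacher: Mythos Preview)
Your proposal is correct and follows essentially the same route as the paper: compute the transition probability for states differing at a single index, multiply by the claimed $\bar\pi$, and observe that the resulting expression is symmetric in $(x,\epsilon_x)$ and $(y,\epsilon_y)$, with the marginal statement following by summing out $\epsilon$. The only cosmetic difference is that you write the marginalization over the auxiliary energies $\epsilon_u$ as an explicit sum, whereas the paper leaves it as an expectation; the key observation---that the cached factor $\exp(\epsilon)$ in $\bar\pi$ symmetrizes the otherwise asymmetric selection probability---is identical.
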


Interestingly, this theorem shows that if we choose our estimation scheme such that for all $x \in \Omega$,
\begin{equation}
  \textstyle
  \label{eqnUnbiasedCondition}
  \Exv[\epsilon \sim \mu_x]{\exp(\epsilon)} = \exp( \zeta(x) ) = \prod_{\phi \in \Phi} \exp\left( \phi(x) \right)
\end{equation}
then MIN-Gibbs will actually be \emph{unbiased}: we will have $\pi(x) = \bar \pi(x)$.
Next, we will show how we can take advantage of this by constructing estimators that satisfy (\ref{eqnUnbiasedCondition}) using minibatching.
Suppose without loss of generality that each $\phi$ is non-negative, $\phi(x) \ge 0$.
Let $\lambda$ be a desired average minibatch size, and for each factor $\phi$, let $s_{\phi}$ be an independent Poisson-distributed random variable with mean $\lambda M_{\phi} / \Psi$.
Let $S \subset \Phi$ denote the set of factors $\phi$ for which $s_{\phi} > 0$.
Then for any state $x$, define the estimator
\begin{equation}
  \label{eqnUnbiasedEst}
  \epsilon_x = \sum_{\phi \in S} s_{\phi} \log\left( 1 + \frac{\Psi}{\lambda M_{\phi}} \phi(x) \right).
\end{equation}
\begin{lemma}
  \label{lemmaEstimatorUnbiased}
  The estimator $\epsilon_x$ defined in (\ref{eqnUnbiasedEst}) satisfies the unbiasedness condition in (\ref{eqnUnbiasedCondition}).
\end{lemma}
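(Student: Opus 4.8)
The plan is to compute $\Exv[\epsilon \sim \mu_x]{\exp(\epsilon)}$ directly, exploiting the fact that exponentiating $\epsilon_x$ converts the sum over the minibatch $S$ into a product over all factors. First I would note that, since $s_\phi = 0$ for every $\phi \notin S$, the estimator in (\ref{eqnUnbiasedEst}) may be rewritten as a sum over all of $\Phi$,
\[
  \epsilon_x = \sum_{\phi \in \Phi} s_\phi \log\left( 1 + \frac{\Psi}{\lambda M_\phi} \phi(x) \right),
\]
so that
\[
  \exp(\epsilon_x) = \prod_{\phi \in \Phi} \left( 1 + \frac{\Psi}{\lambda M_\phi} \phi(x) \right)^{s_\phi},
\]
the factors with $\phi \notin S$ simply contributing $1$. (If some $M_\phi = 0$ then $\phi \equiv 0$, so that factor adds a constant to the energy and can be removed from $\Phi$ without changing $\pi$; I would dispose of this degenerate case up front so that all the ratios $\Psi / (\lambda M_\phi)$ are well defined.)

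Next, since the $s_\phi$ are mutually independent and $\Phi$ is finite, the expectation of this product factorizes:
\[
  \Exv[\epsilon \sim \mu_x]{\exp(\epsilon)} = \prod_{\phi \in \Phi} \Exv[s_\phi]{ \left( 1 + \frac{\Psi}{\lambda M_\phi} \phi(x) \right)^{s_\phi} }.
\]
The key computational step is to evaluate each term via the probability generating function of the Poisson distribution: if $s \sim \mathrm{Poisson}(\nu)$, then $\Exv{t^s} = e^{\nu(t-1)}$ for every $t \ge 0$, the defining series converging since $t \ge 0$. Here the mean is $\nu_\phi = \lambda M_\phi / \Psi$ and the argument is $t_\phi = 1 + \frac{\Psi}{\lambda M_\phi}\phi(x) \ge 1 > 0$, so $t_\phi - 1 = \frac{\Psi}{\lambda M_\phi}\phi(x)$ and $\nu_\phi(t_\phi - 1) = \phi(x)$. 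Hence each factor equals $\exp(\phi(x))$.

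Multiplying over all factors then gives
\[
  \Exv[\epsilon \sim \mu_x]{\exp(\epsilon)} = \prod_{\phi \in \Phi} \exp(\phi(x)) = \exp\!\left( \sum_{\phi \in \Phi} \phi(x) \right) = \exp(\zeta(x)),
\]
which is exactly the unbiasedness condition (\ref{eqnUnbiasedCondition}). I do not anticipate any real obstacle here: the argument is essentially bookkeeping, and the only points requiring any care are the rewrite that absorbs the $\phi \notin S$ terms, the trivial handling of zero-energy factors, and recalling the closed form of the Poisson generating function; the interchange of expectation with the finite product needs nothing beyond independence.
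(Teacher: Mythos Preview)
Your proposal is correct and is essentially the same argument as the paper's: both factor the expectation over the independent $s_\phi$ and then evaluate each factor via the Poisson generating function (the paper phrases this as the moment generating function evaluated at $t=\log(1+\tfrac{\Psi}{\lambda M_\phi}\phi(x))$, which is identical to your use of the probability generating function $\Exv{t^s}=e^{\nu(t-1)}$). Your write-up is slightly more explicit about extending the sum from $S$ to all of $\Phi$ and about the degenerate $M_\phi=0$ case, but the substance is the same.
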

\begin{proof}
  The expected value of the exponential of the estimator defined in (\ref{eqnUnbiasedEst}) is
  \begin{align*}
    \textstyle
    \Exv{\exp(\epsilon_x)}
    =
    \Exv{\exp\left( \sum_{\phi \in S} s_{\phi} \log\left( 1 + \frac{\Psi}{\lambda M_{\phi}} \phi(x) \right) \right)}.
  \end{align*}
  Since the $s_{\phi}$ are independent, this becomes
  \begin{align*}
    \textstyle
    \Exv{\exp(\epsilon_x)}
    =
    \prod_{\phi \in \Phi} \Exv{\exp\left( s_{\phi} \log\left( 1 + \frac{\Psi}{\lambda M_{\phi}} \phi(x) \right) \right)}.
  \end{align*}
  Each of these constituent expected values is an evaluation of the moment generating function of a Poisson random variable. Applying the known expression for this MGF and simplifying gives us the expression in (\ref{eqnUnbiasedCondition}), which is what we wanted to prove.
\end{proof}
From the result of this lemma, we can see that using this \emph{bias-adjusted} estimator with MIN-Gibbs will result in an unbiased chain with stationary distribution $\pi$.
However, the chain being unbiased does not, by itself, mean that this method will be more effective than standard Gibbs.
For that to be true, it must also be the case that approximating the energy did not affect the convergence rate of the algorithm---at least not too much.
The convergence times of Gibbs samplers can vary dramatically, from time linear in the number of variables to exponential time.
As a result, it is plausible that approximating the energy as we do in Algorithm~\ref{algMINTGibbs} could switch us over from a fast-converging chain to a slow-converging one, and as a result even though the individual iterations would be computationally faster, the overall computation would be slow---or worse, would silently give incorrect answers when the chain fails to converge.
Recently, other methods that have been used to speed up Gibbs sampling, such as running asynchronously \cite{desa2016hoggibbs} and changing the order in which the variables are sampled \cite{he2016scan}, have been shown in some situations to result in algorithms that converge significantly slower than plain Gibbs sampling.
Because of this, if we want to use algorithms like MIN-Gibbs with confidence, we need to show that this will not happen for minibatching.

To show that minibatching does not have a disastrous effect on convergence, we need to bound the convergence rate of our algorithms.
To measure the convergence rate, we use a metric called the \emph{spectral gap}~\cite{levin2009markov}.
\begin{definition}
Let $T$ be the transition matrix of a reversible Markov chain.
Since it is reversible, its eigenvalues must all be real, and they can be ordered as
\[
  1 = \lambda_1 \ge \lambda_2 \ge \cdots \ge \lambda_{\Abs{\Omega}}.
\]
The spectral gap $\gamma$ is defined as $\gamma = \lambda_1 - \lambda_2$.
\end{definition}
The spectral gap measures the convergence rate of a Markov chain in terms of the $\ell_2$-distance, in that the larger the spectral gap, the faster the chain converges.
The spectral gap is related to several other metrics of convergence for Markov chains.
For example, it is a standard result that for a lazy Markov chain (a Markov chain is called lazy if at each iteration, it stays in its current state with probability at least $1/2$) the \emph{mixing time} of the chain, which measures the number of steps required to converge to within total-variation distance $\epsilon$ of the stationary distribution $\pi$, is bounded by
\[
  t_{\text{mix}}(\epsilon) \le \frac{1}{\gamma} \log\left( \frac{1}{\epsilon \cdot \min_{x \in \Omega} \pi(x)} \right).
\]
In order to determine the effect of using minibatching on convergence, we would like to bound the spectral gap of MIN-Gibbs in terms of the spectral gap of the original Gibbs sampling chain.
In the following theorem, we show that if the energy estimator $\epsilon_u$ is always sufficiently close to the true energy, then we can bound the spectral gap.
\begin{theorem}
  \label{thmMINTgap}
  Let $\bar \gamma$ be the spectral gap of MIN-Gibbs running with an energy estimator $\mu_x$ that has finite support and that satisfies, for some constant $\delta > 0$ and every $x \in \Omega$,
  \[
    \Prob[\epsilon_x \sim \mu_x]{\Abs{ \epsilon_x - \zeta(x) } \le \delta} = 1.
  \]
  Let $\gamma$ be the spectral gap of a vanilla Gibbs sampling chain running using the exact energy.
  Then,
  \[
    \bar \gamma
    \ge
    \exp(-6 \delta) \cdot \gamma.
  \]
\end{theorem}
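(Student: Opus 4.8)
The plan is to argue through the variational characterization of the spectral gap, $\bar\gamma=\min_{\bar g}\bar{\mathcal E}(\bar g)/\mathrm{Var}_{\bar\pi}(\bar g)$, where $\bar{\mathcal E}$ is the Dirichlet form of the MIN-Gibbs chain on $\Omega\times\R$ with stationary law $\bar\pi(x,\epsilon)\propto\mu_x(\epsilon)\exp(\epsilon)$ (Theorem~\ref{thmMINTreversible}), and likewise $\gamma=\min_f\mathcal E(f)/\mathrm{Var}_\pi(f)$ for vanilla Gibbs with kernel $T$ on $\Omega$. It then suffices to show that every $\bar g:\Omega\times\R\to\R$ satisfies $\bar{\mathcal E}(\bar g)\ge\exp(-6\delta)\,\gamma\,\mathrm{Var}_{\bar\pi}(\bar g)$.

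The $\delta$-closeness hypothesis would be cashed in as a handful of multiplicative $e^{\pm\delta}$ comparisons. Since $\mathrm{supp}(\mu_x)\subseteq[\zeta(x)-\delta,\zeta(x)+\delta]$, the conditional normalizer $Z_x:=\Exv[\epsilon\sim\mu_x]{\exp(\epsilon)}$ lies in $[e^{\zeta(x)-\delta},e^{\zeta(x)+\delta}]$, so the $x$-marginal $\bar\pi_\Omega(x)\propto Z_x$ and $\pi(x)\propto e^{\zeta(x)}$ have ratios that agree across all states to within $e^{2\delta}$ (hence $\mathrm{Var}_{\bar\pi_\Omega}(g)\le e^{2\delta}\mathrm{Var}_\pi(g)$ for all $g$), and the full partition constants are within $e^{\delta}$. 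Moreover, on any resampling step the minibatched normalizer $\sum_u e^{\epsilon_u}$ entering $\rho$ agrees with the exact-Gibbs normalizer to within a factor $e^{\pm\delta}$, pointwise in the minibatch randomness. Combining these, each off-diagonal edge weight $\bar\pi(x,\epsilon)\,\bar T((x,\epsilon),(y,\eta))$ of MIN-Gibbs is bounded below by (a product of four $e^{-\delta}$ factors, coming from the two partition constants, the conditional normalizers $Z_x,Z_y$, and the per-step selection normalizer) times the reference weight $\pi(x)\,T(x,y)\,\bar\pi(\epsilon\mid x)\,\bar\pi(\eta\mid y)$.

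Next I would decompose an arbitrary $\bar g$ by the law of total variance: put $g(x)=\Exv[\epsilon\sim\bar\pi(\cdot\mid x)]{\bar g(x,\epsilon)}$, $h=\bar g-g$, and $V(x)=\mathrm{Var}_{\bar\pi(\cdot\mid x)}(h(x,\cdot))$, so $\mathrm{Var}_{\bar\pi}(\bar g)=\mathrm{Var}_{\bar\pi_\Omega}(g)+\sum_x\bar\pi_\Omega(x)V(x)$. MIN-Gibbs leaves a fiber $\{x\}\times\mathrm{supp}(\mu_x)$ only by changing one coordinate, and then re-draws the energy freshly from $\mu_y$; consequently, on a cross-fiber edge the squared increment $(\bar g(x,\epsilon)-\bar g(y,\eta))^2$, averaged over $\epsilon\sim\bar\pi(\cdot\mid x)$ and $\eta\sim\bar\pi(\cdot\mid y)$, equals $(g(x)-g(y))^2+V(x)+V(y)$. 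Summing against the edge-weight bound of the previous paragraph and using reversibility of $T$ gives $\bar{\mathcal E}(\bar g)\ge e^{-4\delta}\bigl(\mathcal E(g)+\sum_x\pi(x)\,(1-T(x,x))\,V(x)\bigr)$. The first term is then finished by the Poincar\'e inequality and the variance comparison: $\mathcal E(g)\ge\gamma\,\mathrm{Var}_\pi(g)\ge e^{-2\delta}\gamma\,\mathrm{Var}_{\bar\pi_\Omega}(g)$, supplying the last two of the six $\delta$'s on this piece.

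The main obstacle is the within-fiber term $\sum_x\pi(x)(1-T(x,x))V(x)$. Because the algorithm caches the energy, the chain never changes $\epsilon$ while $x$ is held fixed, so the fluctuation $h(x,\cdot)$ can only equilibrate through excursions that leave fiber $x$, whose rate is essentially $1-T(x,x)$; closing the proof therefore requires showing $\pi(x)(1-T(x,x))\ge e^{-2\delta}\gamma\,\bar\pi_\Omega(x)$ uniformly in $x$, i.e.\ that the Gibbs holding-gap $1-T(x,x)$ is never much smaller than the global gap $\gamma$. I expect this to be the real work: the guiding intuition is that if resampling \emph{no} coordinate at $x$ could plausibly change it, that coordinate would be essentially deterministic given the rest, so the single-state indicator $\mathbf 1_x$ could not be the slow eigenmode, but making this quantitative against $\gamma$ (or, alternatively, charging $V(x)$ against the cross-fiber Dirichlet energy more globally rather than pointwise) is the delicate step. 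Once the within-fiber term is bounded below by $e^{-6\delta}\gamma\sum_x\bar\pi_\Omega(x)V(x)$, adding it to the between-fiber estimate and recombining through the total-variance identity yields $\bar{\mathcal E}(\bar g)\ge e^{-6\delta}\gamma\,\mathrm{Var}_{\bar\pi}(\bar g)$, hence $\bar\gamma\ge e^{-6\delta}\gamma$.
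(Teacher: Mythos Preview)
Your overall scaffold (variational characterization via Dirichlet forms, cashing in the hypothesis as a chain of $e^{\pm\delta}$ multiplicative comparisons on edge weights and partition constants) matches the paper. The divergence, and the genuine gap, is in how you pass from a function $\bar g$ on the augmented space $\Omega\times\R$ down to a function on $\Omega$. You go through the law of total variance and a fiber decomposition, which forces you to control the within-fiber piece $\sum_x \pi(x)(1-T(x,x))V(x)$ against $\gamma\sum_x\bar\pi_\Omega(x)V(x)$. As you suspected, this pointwise holding-gap bound $1-T(x,x)\gtrsim\gamma$ is the obstruction, and in fact it does \emph{not} hold with a constant depending only on $\delta$: taking $f=\mathbf 1_x$ in the variational formula gives only $1-T(x,x)\ge\gamma\,(1-\pi(x))$, so at a high-probability mode $x$ with $\pi(x)$ close to $1$ the factor $1-\pi(x)$ can be arbitrarily small, independently of $\delta$. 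Since the adversary choosing $\bar g$ can concentrate all of $V(\cdot)$ on such a state, the within-fiber term cannot be absorbed into the $e^{-6\delta}$ budget by this route.

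The paper sidesteps the fiber decomposition entirely with a ``freezing'' trick. After bounding both $\bar{\mathcal E}(\bar g)$ from below and $\mathrm{Var}_{\bar\pi}(\bar g)$ from above by expressions of the form $\Exv[\boldsymbol\epsilon]{\cdots}$ (where $\boldsymbol\epsilon=(\epsilon_x)_{x\in\Omega}$ are \emph{jointly} sampled, independently across $x$, from the $\mu_x$), it invokes the elementary inequality
\[
  \frac{\Exv{a(Z)}}{\Exv{b(Z)}}\;\ge\;\min_z\frac{a(z)}{b(z)}
\]
with $Z=\boldsymbol\epsilon$. This produces a single \emph{deterministic} assignment $\epsilon_x^\star$ for every $x\in\Omega$, and one then sets $g(x):=\bar g(x,\epsilon_x^\star)$. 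With the $\epsilon$'s frozen, both numerator and denominator collapse exactly to the vanilla-Gibbs Dirichlet form and variance of $g$ (up to the already-collected $e^{-5\delta}$ from the edge-weight and variance comparisons, plus one more $e^{-\delta}$ from $\bar Z/Z$), giving $\bar\gamma\ge e^{-6\delta}\gamma$ directly. The point is that freezing avoids ever separating out a within-fiber variance: the function $g$ is a \emph{section} of $\bar g$, not its fiberwise average, so no $V(x)$ term appears and no holding-gap inequality is needed.
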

That is, the convergence is slowed down by at most a constant factor of $\exp(-5 \delta)$---which is independent of the size of the problem.
This theorem guarantees that if we can restrict our estimates to being within a distance of $O(1)$ of the exact energy, then the convergence rate will not be slowed down by more than an $O(1)$ constant factor.

Unfortunately, the estimator presented in (\ref{eqnUnbiasedEst}) is not going to be always bounded by a constant distance from the exact energy. Still, \cmd{since it is the sum of independent terms with bounded variance}, we \emph{can} bound it with high probability.
\begin{lemma}
  \label{lemmaUnbiasedConcentration}
  For any constants $0 < \delta$ and $0 < a < 1$, if we assign an expected batch size
  \[
    \lambda
    \ge
    \max\left(
      \frac{
        8 \Psi^2
      }{
        \delta^2
      }
      \log\left( \frac{2}{a} \right)
      ,
      \frac{2 \Psi^2}{\delta}
    \right),
  \]
  then the estimator in $(\ref{eqnUnbiasedEst})$ satisfies
  $\Prob{ \Abs{\epsilon_x - \zeta(x) } \ge \delta} \le a$.
\end{lemma}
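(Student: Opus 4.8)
The plan is to write $\epsilon_x - \zeta(x)$ as a sum of independent mean-zero random variables — one per factor $\phi \in \Phi$ — and then apply a concentration inequality (Bernstein's inequality is the natural choice, since each term has bounded variance but is not itself bounded by a constant). For each $\phi$, define $Y_\phi = s_\phi \log\bigl(1 + \tfrac{\Psi}{\lambda M_\phi} \phi(x)\bigr) - \phi(x)$. By Lemma~\ref{lemmaEstimatorUnbiased} (or rather, by the computation in its proof, which shows $\Exv{\exp(\epsilon_x)} = \exp(\zeta(x))$) together with the observation that $\Exv{s_\phi} = \lambda M_\phi / \Psi$, one checks directly that $\Exv{s_\phi \log(1 + \tfrac{\Psi}{\lambda M_\phi}\phi(x))} = \tfrac{\lambda M_\phi}{\Psi}\log(1 + \tfrac{\Psi}{\lambda M_\phi}\phi(x))$, and then the inequality $\log(1+t) \le t$ gives that this is at most $\phi(x)$ — but we actually need the \emph{exact} mean of $\epsilon_x$, so I'll track that the expectation of $\epsilon_x$ equals $\sum_\phi \tfrac{\lambda M_\phi}{\Psi}\log(1+\tfrac{\Psi}{\lambda M_\phi}\phi(x))$, call it $\zeta'(x)$, which satisfies $\zeta'(x) \le \zeta(x)$; I will need to separately control the gap $\zeta(x) - \zeta'(x)$.

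First I would bound the per-factor variance. Since $s_\phi$ is Poisson with mean $m_\phi := \lambda M_\phi/\Psi$, we have $\mathrm{Var}(s_\phi) = m_\phi$, so $\mathrm{Var}(Y_\phi) = m_\phi \log^2(1 + \tfrac{1}{m_\phi}\phi(x))$. Using $0 \le \phi(x) \le M_\phi$ and $\log(1+t) \le t$, this is at most $m_\phi \cdot (\tfrac{\phi(x)}{m_\phi})^2 = \tfrac{\phi(x)^2}{m_\phi} \le \tfrac{M_\phi^2}{m_\phi} = \tfrac{\Psi M_\phi}{\lambda}$. Summing over $\phi$, the total variance is at most $\tfrac{\Psi}{\lambda}\sum_\phi M_\phi = \tfrac{\Psi^2}{\lambda}$. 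For the Bernstein "width" parameter I need a bound on $Y_\phi$ from above; the term $s_\phi \log(1+\tfrac{1}{m_\phi}\phi(x))$ is not a.s. bounded because $s_\phi$ is unbounded, but its moment generating function is well-controlled (exactly the Poisson MGF computation already used), so I would invoke the Bernstein-type bound for sums of independent variables whose positive tails are dominated by their variance — equivalently, bound the centered MGF $\Exv{\exp(\theta Y_\phi)}$ directly using the Poisson MGF and the inequality $e^u - 1 - u \le \tfrac{u^2}{2}e^{u_+}$, which works cleanly here because $\theta \log(1+\tfrac{1}{m_\phi}\phi(x)) \le \theta \cdot \tfrac{1}{m_\phi}\phi(x)$ is small once $\lambda$ is large relative to $\Psi$.

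The main obstacle, and the reason the lemma has \emph{two} terms in the $\max$, is the following subtlety: $\epsilon_x$ is centered at $\zeta'(x)$, not at $\zeta(x)$, so to get concentration of $\epsilon_x$ around $\zeta(x)$ I must show the deterministic bias $\zeta(x) - \zeta'(x)$ is at most $\delta/2$, then apply concentration with radius $\delta/2$. The bias is $\sum_\phi \bigl(\phi(x) - m_\phi \log(1 + \tfrac{\phi(x)}{m_\phi})\bigr)$; using $t - \log(1+t) \le \tfrac{t^2}{2}$ for $t \ge 0$ with $t = \phi(x)/m_\phi$ gives a per-factor bound of $m_\phi \cdot \tfrac{1}{2}(\tfrac{\phi(x)}{m_\phi})^2 \le \tfrac{M_\phi^2}{2 m_\phi} = \tfrac{\Psi M_\phi}{2\lambda}$, so the total bias is at most $\tfrac{\Psi^2}{2\lambda}$, which is $\le \delta/2$ exactly when $\lambda \ge \Psi^2/\delta$; this explains the second term $\tfrac{2\Psi^2}{\delta}$ (the factor $2$ comes from splitting $\delta$ into $\delta/2 + \delta/2$ and wanting a little slack, or from using radius $\delta/2$ for the random part as well). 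Then the Bernstein step with variance proxy $\tfrac{\Psi^2}{\lambda}$ and radius $\delta/2$ gives a failure probability $\le 2\exp\bigl(-\tfrac{(\delta/2)^2}{2 \cdot \Psi^2/\lambda + \text{(lower order)}}\bigr)$, which is $\le a$ once $\lambda \ge \tfrac{8\Psi^2}{\delta^2}\log(2/a)$, matching the first term. I would finish by noting both constraints on $\lambda$ must hold simultaneously, hence the $\max$, and that the bound is uniform in $x$ because every estimate above used only $\phi(x) \le M_\phi$, not the specific value of $x$.
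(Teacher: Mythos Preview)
Your proposal follows the same architecture as the paper's proof: split $\epsilon_x - \zeta(x)$ into a deterministic bias $\zeta(x)-\zeta'(x)$ plus a centered random fluctuation, bound the bias by $\Psi^2/\lambda$ (or $\Psi^2/(2\lambda)$) via the Taylor estimate $t-\log(1+t)\le t^2/2$, bound the total variance by $\Psi^2/\lambda$, and then apply a Bernstein-type tail bound with radius $\delta/2$. So structurally you are doing exactly what the paper does.

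The one substantive point of divergence is how you handle the unboundedness of the Poisson summands in the concentration step. You propose to bound the centered MGF directly using the Poisson MGF and the inequality $e^u-1-u\le \tfrac{u^2}{2}e^{u_+}$. This works, but it leaves a residual factor (roughly $e^{u_+}$, or equivalently a nonzero ``width'' term in Bernstein) that prevents you from getting the clean sub-Gaussian tail $2\exp(-\lambda t^2/(2\Psi^2))$; you would instead land on $2\exp(-c\lambda t^2/\Psi^2)$ for some $c<1/2$, which yields a constant larger than $8$ in the first branch of the $\max$. The paper sidesteps this with a trick you may find useful: since the Poisson distribution is \emph{infinitely divisible}, each $s_\phi$ can be written as a sum of arbitrarily many i.i.d.\ Poisson pieces, so the ``maximum term size'' $C$ in Bernstein's inequality can be driven to zero. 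Passing to the limit $C\to 0$ gives the exact sub-Gaussian bound $2\exp(-\lambda t^2/(2\Psi^2))$, from which the stated constants $8\Psi^2/\delta^2$ and $2\Psi^2/\delta$ fall out on setting $t=\delta/2$. Your argument is correct up to constants; the paper's divisibility trick is what pins them down.
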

This lemma lets us construct minibatch estimators that remain arbitrarily close to the true energy with arbitrarily high probability.
Furthermore, we can do this with a minibatch size independent of the number of variables or factors, depending instead on the total energy.
This means that if we have a very large number of low-energy factors, we can get significant speedup from MIN-Gibbs with high-probability theoretical guarantees on the convergence rate.
In particular, since the computational cost of running a single epoch of MIN-Gibbs is $D$ times the minibatch size, and this lemma suggests we need to set $\lambda = \Omega(\Psi^2)$, it follows that the total computational cost of MIN-Gibbs will be $O(\Psi^2 D)$.

\paragraph{Validation of MIN-Gibbs.} Having characterized the effects of minibatching on Gibbs sampling, we present a synthetic scenario where Algorithm~\ref{algMINTGibbs} can be applied.
The Ising model \cite{ising1925beitrag} is a probabilistic model over an $N \times N$ lattice, with domain $x(i) \in \{-1,1\}$.
The Ising model has a physical interpretation in which each $x(i)$ represent the magnetic \emph{spin} at each site.
The energy of a configuration is given by:
\[
  \textstyle
  \zeta_{\text{Ising}}(x) = \sum_{i = 1}^N \sum_{j = 1}^N \beta \cdot A_{ij} \cdot (x(i) x(j) + 1),
\]
where $A_{ij}$ is called the \emph{interaction} between variable $i$ and $j$, and $\beta$ is the \emph{inverse temperature}.\footnote{In some settings, the Ising model is used with an additional bias term (which physically represents a static magnetic field), but here for simplicity we do not include this term.}
We chose to use the Ising model to validate MIN-Gibbs because it is a simple model that nevertheless has non-trivial statistical behavior.

We chose an Ising model in which each site is fully connected (i.e. $\Delta = N^2 - 1$),
and the strength of interaction $A_{ij}$ between any two sites is determined based on their distance by a Gaussian kernel.
We simulated Algorithm~\ref{algMINTGibbs} on a graph with $n = N^2 = 400$ and inverse temperature $\beta = 1$. \cmd{This $\beta$ parameter was hand-tuned such that the Gibbs sampler seemed to mix in about the number of iterations we wanted to run. To have a fair comparison, the same setup was then used to evaluate MIN-Gibbs.
For this model, $L = 2.21$ and $\Psi = 416.1$.}\footnote{Note that since we chose $\beta$ large enough that $\Psi^2 > \Delta$ for this model, we do not expect MIN-Gibbs to be faster than Gibbs sampling for this particular synthetic example.} 
We started the algorithm with a unmixed configuration where each site takes on the same state ($x(i) = 1$ for all $i$).

As the algorithm ran, we used the output samples to compute a running average of the marginal distributions of each variable.
By symmetry (since negating a state will not change its probability), the marginal distribution of each variable in the stationary distribution $\pi$ should be uniform, so we can use the distance between the estimated marginals and the uniform distribution as a proxy to evaluate the convergence of the Markov chain.
Figure~\ref{fig:MINT} shows the average $\ell_2$-distance error in the estimated marginals compared with the fully-mixed state.
Notice that as the batch size increases, MIN-Gibbs approaches vanilla Gibbs sampling.

\begin{figure}[!tb]
  \centering
  \includegraphics[width=\myfcwidth]{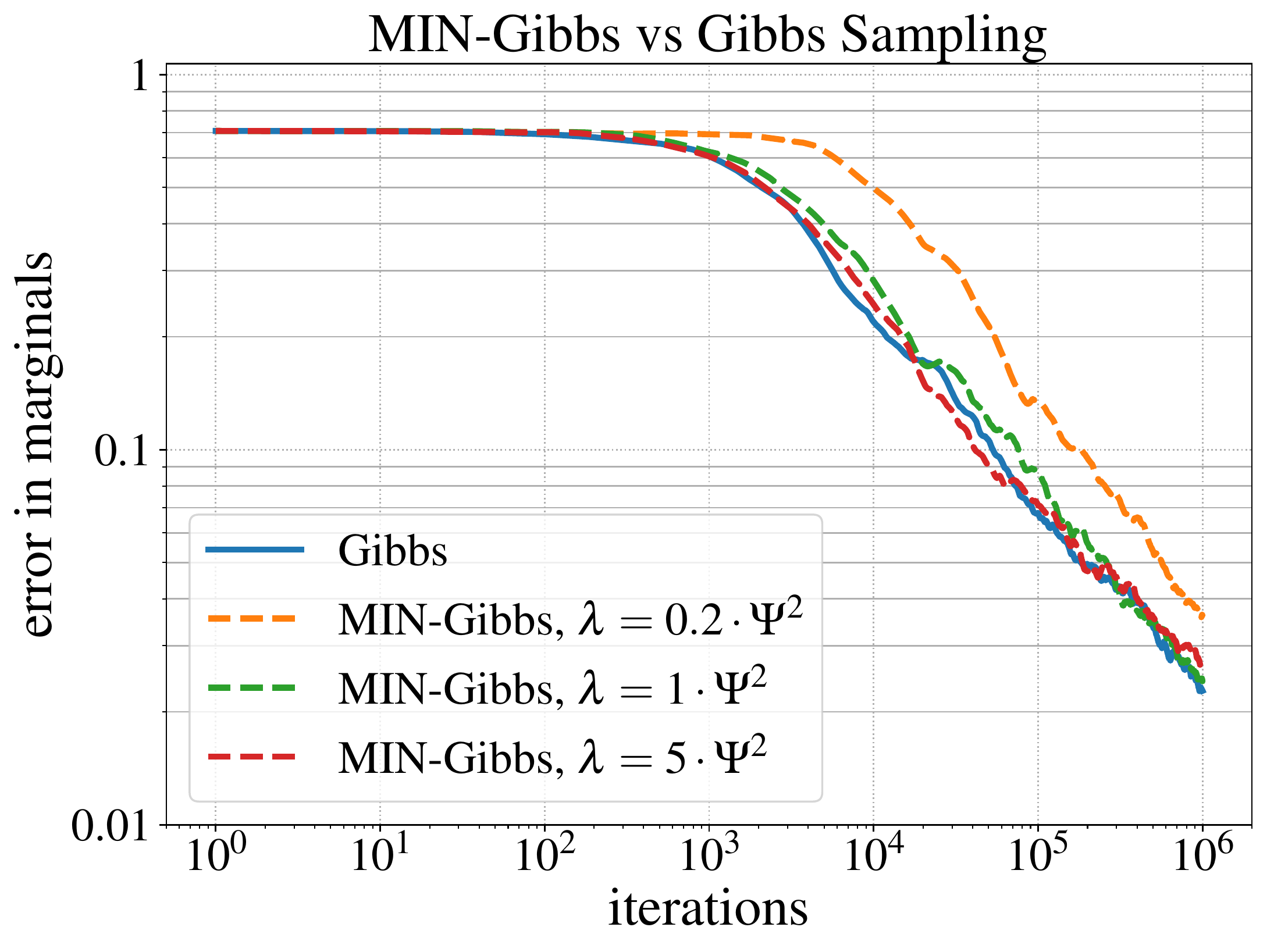}
    \caption{Convergence of marginal estimates for MIN-Gibbs compared with vanilla Gibbs sampling.}
  \vspace{-2ex}
  \label{fig:MINT}
  \end{figure}

\textbf{Using local structure.} Although MIN-Gibbs has a computational cost that is independent of the size of the graph, it still depends on the total maximum energy $\Psi$.
That is, unlike vanilla Gibbs sampling, which has a computational cost that depends only on the number of factors \emph{local} to the variable that is being sampled, MIN-Gibbs depends on \emph{global} properties of the graph.
This is because the estimators used by MIN-Gibbs are approximating the whole energy sum $\zeta(x)$, rather than approximating the sum over just those factors which depend on the variable that is being resampled. 
How can we modify MIN-Gibbs to take advantage of this local structure?

One straightforward way to do it is to allow the estimators $\epsilon_x$ to be \emph{dependent}, rather than independent.
That is, instead of choosing a unique minibatch every time we estimate the energy, we choose one fixed minibatch for each iteration.
Once our minibatch is fixed, the conditional distribution computation will exhibit the same cancellation of irrelevant factors that vanilla Gibbs has, and we will be able to compute the transition probabilities using only local information.
One side-effect of this is that we can no longer use the energy-caching technique that MIN-Gibbs uses, as we will no longer ever be estimating the total energy, but rather only the local component of the energy (the part that is dependent on the variable we are re-sampling).
This would result in something like Algorithm~\ref{algLocalMinibatchGibbs}.
(We could also consider variants of this algorithm that use different minibatching schemes, such as (\ref{eqnUnbiasedEst}), but here for simplicity we present the version that uses standard minibatching.)
Note that Algorithm~\ref{algLocalMinibatchGibbs} is nearly identical to vanilla Gibbs sampling, except that it uses a single minibatch to estimate all the energies in each iteration.

\begin{algorithm}[t]
  \caption{Local Minibatch Gibbs}
    \begin{algorithmic}
    \label{algLocalMinibatchGibbs}
    \STATE \textbf{given:} initial state $x \in \Omega \times \R$, minibatch size $B$
    \LOOP
      \STATE \textbf{sample} variable index $i$ uniformly from $\{1,\ldots,n\}$
      \STATE \textbf{sample} minibatch $S \subset A[i]$ uniformly s.t. $\Abs{S} = B$
      \FORALL{$u$ \textbf{in} $\{1, \ldots, D\}$}
        \STATE $x(i) \leftarrow u$
        \STATE $\epsilon_u \leftarrow \frac{\Abs{A[i]}}{\Abs{S}} \sum_{\gamma \in S} \phi(x)$
      \ENDFOR
      \STATE construct distribution $\rho$ over $\{1, \ldots, D\}$ where
      \[
        \rho(v) \propto \exp(\epsilon_v)
      \]
      \STATE \textbf{sample} $v$ from $\rho$.
      \STATE $x(i) \leftarrow v$
      \STATE \textbf{output sample} $x$
    \ENDLOOP
  \end{algorithmic}
  \end{algorithm}

Algorithm~\ref{algLocalMinibatchGibbs} will run iterations in time $O(B D)$, which can be substantially faster than plain Gibbs, which runs in $O(\Delta D)$.
We evaluate Algorithm~\ref{algLocalMinibatchGibbs} empirically, and demonstrate in Figure~\ref{fig:local} that it converges, with almost the same trajectory as plain Gibbs, for various values of the batch size $B$. The simulation here is on the same Ising model as in Algorithm~\ref{algMINTGibbs}, with the same parameters.
Unfortunately, it is unclear if there is anything useful we can say about its convergence rate or even what it converges to.
Unlike MIN-Gibbs, because of the lack of energy-caching there is no obvious reversibility argument to be made here, and so we can not prove bounds on the spectral gap, since those bounds require reversibility.

\begin{figure*}[t]
  \centering
  \subfigure[]
  {
    \includegraphics[width=0.30\linewidth]{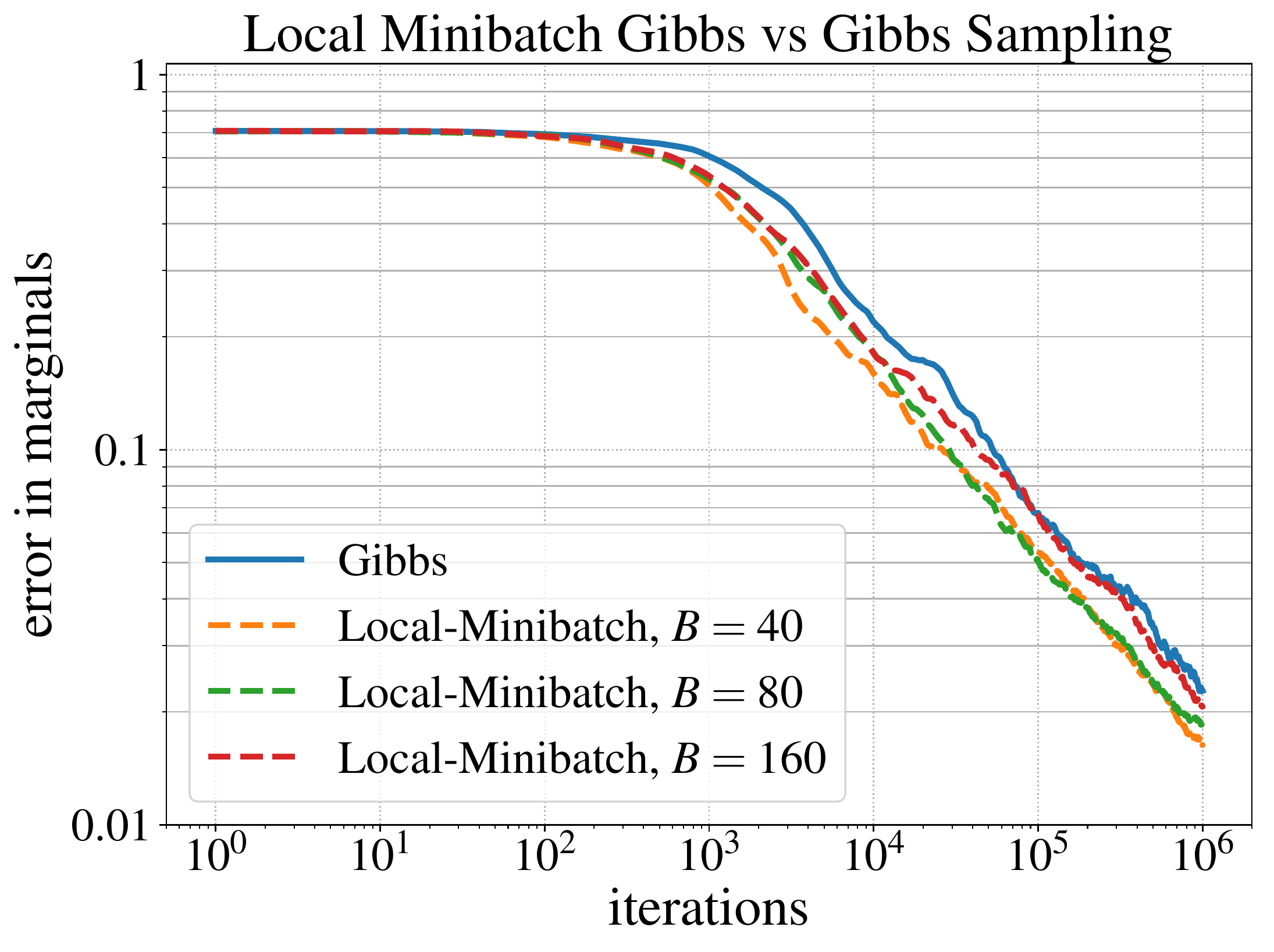}
    \label{fig:local}
  }
  \subfigure[]
  {
    \includegraphics[width=0.30\linewidth]{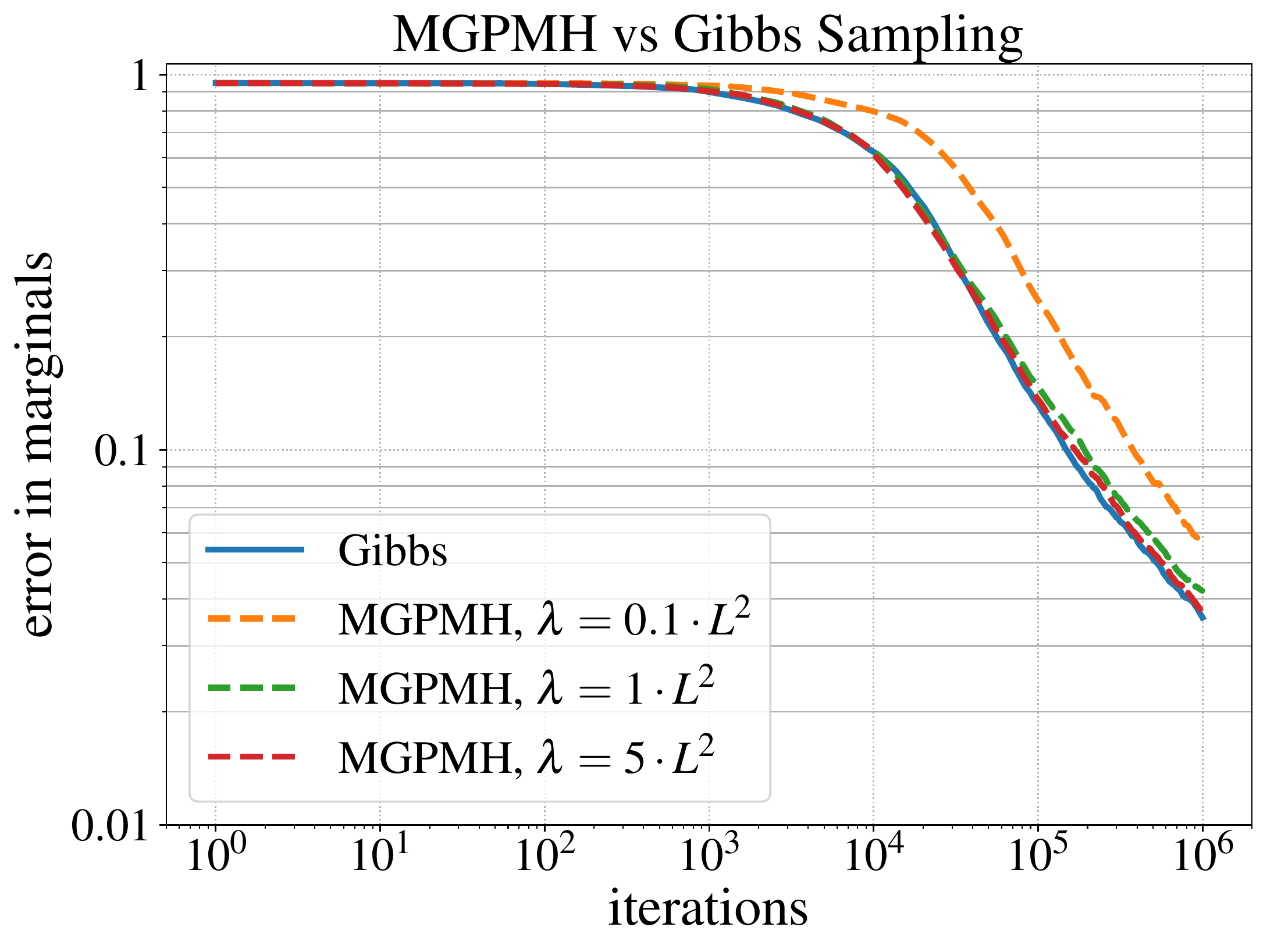}
    \label{fig:MGPMH}
  }
  \subfigure[]
  {
    \includegraphics[width=0.30\linewidth]{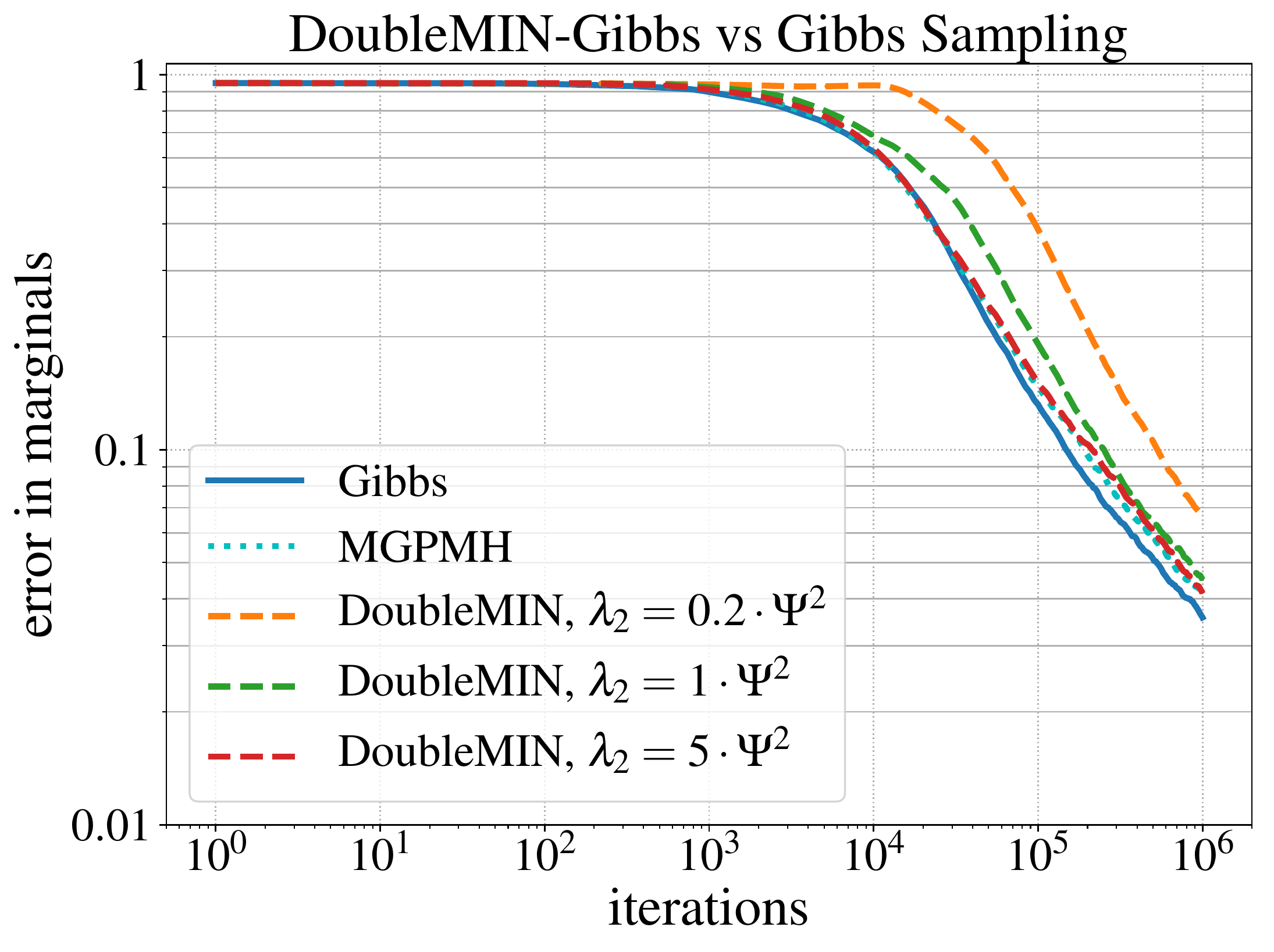}
    \label{fig:double}
  }
  \vspace{-1em}
  \caption{Convergence of (a) Local Minibatch Gibbs, (b) MGPMH, and (c) DoubleMIN-Gibbs, compared with vanilla Gibbs sampling.}
  \label{fig:convergence}
\end{figure*}

\section{Minibatch-Gibbs-Proposal MH}

We left off in the previous section by presenting Algorithm~\ref{algLocalMinibatchGibbs}, which we showed has promising computational cost but gives us no guarantees on accuracy or convergence rate.
There is a general technique that we can use to transform such chains into ones that \emph{do} have accuracy guarantees: Metropolis-Hastings~\cite{hastings1970monte}.
This well-known technique uses updates from an arbitrary Markov chain, called the \emph{proposal distribution}, but then chooses whether or not to \emph{reject} the update based on the true target distribution $\pi$.
This rejection step reshapes the proposal chain into one that is reversible with stationary distribution $\pi$. 
A natural next step for us is to use Metropolis-Hastings with Algorithm~\ref{algLocalMinibatchGibbs} as a proposal distribution.
Doing this results in Algorithm~\ref{algMGPMH}.\footnote{Note that Algorithm~\ref{algMGPMH} is not \emph{precisely} Metropolis-Hastings, because its acceptance probability differs somewhat from what Metropolis-Hastings would have as it is dependent on prior randomness (the selected variable $i$ and minibatch weights $s_{\phi}$).}

\begin{algorithm}[t]
  \caption{MGPMH: Minibatch-Gibbs-Proposal MH}
  \label{algMGPMH}
    \begin{algorithmic}
    \REQUIRE Initial model $x \in \Omega$ and average batch size $\lambda$
    \LOOP
      \STATE Sample $i$ uniformly from $\{1, \ldots, n \}$.
      \FORALL{$\phi$ \textbf{in} $A[i]$}
        \STATE Sample $s_{\phi} \sim \text{Poisson}\left( \frac{ \lambda M_{\phi} }{L} \right)$
      \ENDFOR
      \STATE $S \leftarrow \{ \phi | s_{\phi} > 0 \}$
      \FORALL{$u$ \textbf{in} $\{1, \ldots, D\}$}
        \STATE $\epsilon_u \leftarrow \sum_{\phi \in S} \frac{ s_{\phi} L }{ \lambda M_{\phi} } \phi(x)$
      \ENDFOR
      \STATE construct distribution $\psi(v) \propto \exp\left( \epsilon_v \right)$
      \STATE \textbf{sample} $v$ from $\psi$.
      \STATE construct update candidate $y \leftarrow x$; $y(i) \leftarrow v$
      \STATE compute the update probability
      \[
        a
        =
        \frac{
          \exp\left(
            \sum_{\phi \in A[i]} \phi(y)
          \right)
        }{
          \exp\left(
            \sum_{\phi \in A[i]} \phi(x)
          \right)
        }
        \cdot
        \frac{
          \exp\left(
            \epsilon_{x(i)}
          \right)
        }{
          \exp\left(
            \epsilon_{y(i)}
          \right)
        }.
      \]
      \STATE Update $x \leftarrow y$ with probability $\min(a, 1)$.
    \ENDLOOP
  \end{algorithmic}
  \end{algorithm}

Because Algorithm~\ref{algMGPMH} is based on Metropolis-Hastings, it is natural for it to be reversible and have $\pi$ as its stationary distribution.
We prove that this must be the case.
We also prove a bound on the spectral gap of the chain. 
\begin{theorem}
  \label{thmMGPMHreversible}
  Algorithm~\ref{algMGPMH} is reversible, and it has stationary distribution $\pi$.
\end{theorem}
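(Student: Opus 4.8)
The plan is to verify directly that the one-step transition kernel $T$ of Algorithm~\ref{algMGPMH} satisfies detailed balance with respect to $\pi$; since detailed balance implies stationarity (as noted in the excerpt), this establishes both halves of the theorem simultaneously. Write $x^{i\to u}$ for the state that agrees with $x$ everywhere except that coordinate $i$ is reassigned to $u$ (in the notation of Algorithm~\ref{algGibbs}, the state right after the assignment $x(i)\leftarrow u$), and set $h_i(x) = \sum_{\phi\in A[i]} \phi(x)$. Because every factor $\phi\notin A[i]$ takes the same value on $x$ and $x^{i\to u}$, the normalizer of $\pi$ cancels in the ratio and we get the identity $\pi(x^{i\to u})/\pi(x) = \exp\!\big(h_i(x^{i\to u}) - h_i(x)\big)$, which is what the first factor of the acceptance ratio $a$ in the algorithm computes.

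The key structural observation is that a single pass of the loop is parametrized by the pair $(i,s)$, where $i$ is the chosen variable and $s = (s_\phi)_{\phi\in A[i]}$ is the vector of Poisson counts; this pair is drawn with probability $q(i,s) = \tfrac1n\prod_{\phi\in A[i]}\mathrm{Pois}(s_\phi;\lambda M_\phi/L)$, independently of the current state. Conditioned on $(i,s)$, the estimates $\epsilon_u = \sum_{\phi\in S}\tfrac{s_\phi L}{\lambda M_\phi}\phi(x^{i\to u})$, the proposal law $\psi(u) = \exp(\epsilon_u)/Z$ with $Z = \sum_w\exp(\epsilon_w)$, and hence $Z$ itself, all depend on the current state only through its coordinates $x(j)$ for $j\neq i$ — never through $x(i)$. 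Therefore, if $x$ and $y$ differ only in coordinate $i$, the forward move $x\to y$ and the reverse move $y\to x$ are governed by the \emph{same} vector $(\epsilon_u)_u$ and the \emph{same} normalizer $Z$; this is precisely the feature that lets the construction behave like an honest conditional Metropolis--Hastings step, even though, as the footnote notes, it is not literally Metropolis--Hastings against the marginal proposal $\Exv[s]{\psi(v)}$.

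Given this, I would decompose $T = \sum_{i,s} q(i,s)\,T_{i,s}$ and show each component $T_{i,s}$ is $\pi$-reversible; a convex combination of kernels reversible for a common distribution is again reversible for it, which finishes the proof. Fix $(i,s)$ and states $x\neq y$. If they differ in more than one coordinate, $T_{i,s}(x,y) = T_{i,s}(y,x) = 0$, so assume $y = x^{i\to v}$ and write $w = x(i)$. Off the diagonal, $T_{i,s}(x,y) = \psi(v)\min(1,a_{x\to y})$ where, reading off the algorithm, $a_{x\to y} = \exp\!\big(h_i(y)-h_i(x)\big)\cdot\exp(\epsilon_w-\epsilon_v)$. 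Using $\psi(v) = \exp(\epsilon_v)/Z$ together with the factorization of $\pi$ above,
\[
  \pi(x)\,\psi(v)\,a_{x\to y}
  = \pi(x)\,\frac{\exp(\epsilon_w)}{Z}\,\exp\!\big(h_i(y)-h_i(x)\big)
  = \pi(y)\,\psi(w),
\]
so $\pi(x)\,T_{i,s}(x,y) = \min\!\big(\pi(x)\psi(v),\,\pi(y)\psi(w)\big)$, and the identity with $x$ and $y$ interchanged (same $\psi$, same $Z$, by the second paragraph) gives $\pi(y)\,T_{i,s}(y,x) = \min\!\big(\pi(y)\psi(w),\,\pi(x)\psi(v)\big)$, the same quantity. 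Averaging over $(i,s)$ with weights $q(i,s)$ yields $\pi(x)T(x,y) = \pi(y)T(y,x)$ for all $x,y$, hence $\pi T = \pi$.

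The only real subtlety — and the step I would be most careful about — is the structural claim of the second paragraph: that the randomized proposal and its normalizer are functions of $(i,s)$ and of $x$ restricted to coordinates other than $i$, so the reverse transition reuses exactly the same minibatch randomness rather than a freshly drawn one. Everything after that is the textbook Metropolis--Hastings detailed-balance computation. A minor secondary point to state explicitly is that $\psi(u)>0$ and $a_{x\to y}>0$ for every value $u$ and every pair of states (all quantities involved are exponentials), so the component kernels and these ratios are always well defined.
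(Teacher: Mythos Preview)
Your proof is correct and follows essentially the same route as the paper's: decompose the transition kernel as an average over the auxiliary randomness $(i,s)$, check detailed balance for each conditional kernel $T_{i,s}$ by reducing $\pi(x)T_{i,s}(x,y)$ to a manifestly symmetric expression, and then average. The paper's write-up expands everything out in terms of $\zeta$ and $\epsilon$ to exhibit the symmetric quantity $\min\!\big(\exp(\zeta(y))\exp(\epsilon_{x(i)}),\exp(\zeta(x))\exp(\epsilon_{y(i)})\big)\big/\sum_w\exp(\epsilon_w)$, whereas you keep things packaged as $\min(\pi(x)\psi(v),\pi(y)\psi(w))$; these are the same computation, and your explicit isolation of the structural fact that $(\epsilon_u)_u$ and $Z$ depend only on $(i,s)$ and on $x$ off coordinate $i$ is exactly the point the paper relies on implicitly.
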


\begin{theorem}
  \label{thmMGPMHgap}
  Let $\bar \gamma$ be the spectral gap of MGPMH, and let $\gamma$ be the spectral gap of a vanilla Gibbs sampling chain running on the same factor graph.
  Suppose the expected minibatch size is large enough that $L \le \lambda$.
  Then,
  \[
    \bar \gamma
    \ge
    \exp\left(
      -L^2 / \lambda
    \right)
    \cdot
    \gamma.
  \]
\end{theorem}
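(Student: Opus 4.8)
The plan is to compare MGPMH with vanilla Gibbs through their Dirichlet forms. Write $T$ for the transition matrix of plain Gibbs and $\bar T$ for that of MGPMH. Both are reversible with respect to $\pi$ --- the former because each single-site resampling step is, and the latter by Theorem~\ref{thmMGPMHreversible} --- so the spectral gap of each chain equals $\min\{\mathcal E_K(f)/\mathrm{Var}_\pi(f) : \mathrm{Var}_\pi(f)\ne 0\}$, where $\mathcal E_K(f) = \tfrac12\sum_{x\ne y}\pi(x)K(x,y)(f(x)-f(y))^2$ is the Dirichlet form. Consequently it suffices to establish the pointwise kernel inequality $\bar T(x,y)\ge\exp(-L^2/\lambda)\cdot T(x,y)$ for all $x\ne y$: this gives $\mathcal E_{\bar T}\ge\exp(-L^2/\lambda)\cdot\mathcal E_T$ and hence $\bar\gamma\ge\exp(-L^2/\lambda)\cdot\gamma$. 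Since both chains change only one variable per step, we may assume $y = x$ with variable $i$ reset to some value $v\ne x(i)$; write $x_{i\to w}$ for the state $x$ with variable $i$ set to $w$.

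The key observation is that, conditioned on the chosen index $i$ and the realized minibatch weights $s = (s_\phi)_{\phi\in A[i]}$, one step of MGPMH is \emph{exactly} a single-site Metropolis--Hastings update of variable $i$ whose proposal is $\psi(w)\propto\exp(\epsilon_w)$ and whose target is the true conditional $\rho_i(w\mid x)\propto\exp(\zeta_i(x_{i\to w}))$, where $\zeta_i(x) = \sum_{\phi\in A[i]}\phi(x)$. Indeed, the estimates $\epsilon_w$ depend only on $i$, $w$, and $s$ (not on the current value at $i$), so $\psi$ is the same whether formed from $x$ or from $y$, and one checks directly that the acceptance ratio $a$ written in Algorithm~\ref{algMGPMH} equals $\rho_i(v)\psi(x(i))/(\rho_i(x(i))\psi(v))$. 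The standard Metropolis--Hastings identity then gives, for $v\ne x(i)$,
\[
  \psi(v)\min(a,1) = \rho_i(v)\cdot\min\!\left(\frac{\psi(x(i))}{\rho_i(x(i))},\, \frac{\psi(v)}{\rho_i(v)}\right),
\]
and averaging over $s$ (dividing out the $1/n$ probability of selecting $i$, which also appears in $T(x,y) = \tfrac1n\rho_i(v\mid x)$) reduces the theorem to showing
\[
  \Exv[s]{\min\!\left(\frac{\psi(x(i))}{\rho_i(x(i))},\, \frac{\psi(v)}{\rho_i(v)}\right)} \ge \exp(-L^2/\lambda)
\]
for every $i$, $x$, and $v\ne x(i)$.

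To bound this I would control the reciprocal moments of the proposal. Using $\min(A,B)\ge(A^{-1}+B^{-1})^{-1}$ and Jensen's inequality for the convex function $t\mapsto 1/t$, the left-hand side is at least $\big(\rho_i(x(i))\Exv[s]{\psi(x(i))^{-1}} + \rho_i(v)\Exv[s]{\psi(v)^{-1}}\big)^{-1}$. Now $\psi(w)^{-1} = \sum_{w'}\exp(\epsilon_{w'}-\epsilon_w)$, and because the $s_\phi$ are independent Poisson variables each $\Exv[s]{\exp(\epsilon_{w'}-\epsilon_w)}$ factors into a product of Poisson moment generating functions. The hypothesis $L\le\lambda$ is exactly what forces every exponent $\tfrac{L}{\lambda M_\phi}(\phi(x_{i\to w'})-\phi(x_{i\to w}))$ into $[-1,1]$, so the estimate $e^t-1\le t+t^2$ applies termwise; combined with $\lvert\phi(x_{i\to w'})-\phi(x_{i\to w})\rvert\le M_\phi$ and $\sum_{\phi\in A[i]}M_\phi\le L$, this yields $\Exv[s]{\exp(\epsilon_{w'}-\epsilon_w)}\le\exp(L^2/\lambda)\exp(\zeta_i(x_{i\to w'})-\zeta_i(x_{i\to w}))$. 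Summing over $w'$ gives $\Exv[s]{\psi(w)^{-1}}\le\exp(L^2/\lambda)/\rho_i(w\mid x)$, and substituting into the bound above collapses it to (a constant times) $\exp(-L^2/\lambda)$, as required.

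The hard part is the reciprocal-moment step: the numerator and denominator of $\psi(w) = \exp(\epsilon_w)/\sum_{w'}\exp(\epsilon_{w'})$ are correlated through the shared weights $s$, so one cannot simply push the expectation past the ratio. The rewrite $\psi(w)^{-1} = \sum_{w'}\exp(\epsilon_{w'}-\epsilon_w)$ together with the $\min(A,B)\ge(A^{-1}+B^{-1})^{-1}$ and Jensen maneuver is what sidesteps this --- it converts the problematic normalized ratio into a sum of pairwise differences whose expectations are cleanly controlled by the Poisson MGF estimate, which is where both $L\le\lambda$ and $\sum_{\phi\in A[i]}M_\phi\le L$ get used. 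One caveat is that the $(A^{-1}+B^{-1})^{-1}$ bound is loose by up to a factor of two when its arguments nearly coincide, so recovering the constant $\exp(-L^2/\lambda)$ exactly (rather than $\tfrac12\exp(-L^2/\lambda)$) needs either a marginally sharper treatment of the minimum or the factor absorbed elsewhere; the qualitative conclusion --- that minibatching slows convergence by no more than the problem-size-independent factor $\exp(-L^2/\lambda)$ --- is unaffected.
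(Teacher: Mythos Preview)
Your proposal is correct and follows the same overall route as the paper: reduce to a pointwise comparison $\bar T(x,y)\ge c\cdot T(x,y)$ via Dirichlet forms, rewrite the ratio as an expectation over the minibatch weights $s$, apply Jensen's inequality for $t\mapsto 1/t$ to push the expectation past a reciprocal, and then control the result with the Poisson moment generating function together with $e^t-1\le t+t^2$ for $\lvert t\rvert\le 1$ (this is exactly where the hypothesis $L\le\lambda$ and the bound $\sum_{\phi\in A[i]}M_\phi\le L$ enter). The identification of the conditional MGPMH step as Metropolis--Hastings with proposal $\psi$ and target $\rho_i$, and the reduction to bounding $\Exv[s]{\min(\psi(x(i))/\rho_i(x(i)),\psi(v)/\rho_i(v))}$, matches the paper's computation of $\pi(x)\bar T(x,y)/\pi(x)T(x,y)$.

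The only substantive difference is how the $\min$ is handled, and this is precisely the factor-of-$2$ issue you already flagged. You lower-bound $\min(A,B)$ by the harmonic mean $(A^{-1}+B^{-1})^{-1}$ and then bound $\Exv{A^{-1}}$ and $\Exv{B^{-1}}$ separately, which is clean but costs a factor of $2$. The paper instead writes $\min=1/\max$, merges the $\max$ with the normalizing sum $\sum_w\exp(\epsilon_w)$ into a single denominator, applies Jensen once to the whole fraction, and only then decouples via $\max\bigl(\sum_\phi u_\phi,\sum_\phi v_\phi\bigr)\le\sum_\phi\max(u_\phi,v_\phi)$ at the per-factor level before invoking the Poisson MGF. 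Keeping the two branches coupled through Jensen and separating only factor-by-factor is what lets the paper recover the stated constant $\exp(-L^2/\lambda)$ without the extra $1/2$. Your argument is otherwise the same, and your caveat about the constant is accurate.
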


These theorems mean that MGPMH will converge to the correct stationary distribution $\pi$, and will do so with a convergence rate that is at most a factor of $\exp(L^2/\lambda)$ slower than the vanilla Gibbs chain.
By making $\lambda \approx L^2$, this difference in convergence rates can be made $O(1)$.

On the other hand, when we look at the computational cost of an iteration of Algorithm~\ref{algMGPMH}, we notice it will be $O(\Delta + D \Abs{S})$, since we spend $O(\Delta)$ time computing the minibatch coefficients $s_{\phi}$ and the acceptance probability $a$, and we spend $O(D \Abs{S})$ time computing the energy estimates $\epsilon_u$.
In expectation, the minibatch size $\Abs{S}$ will be
\[
  \textstyle
  \Exv{\Abs{S}} \le \sum_\phi s_{\phi} = \sum_\phi \frac{\lambda M_{\phi}}{L} = \lambda,
\]
so our average runtime for an iteration will be $O(\lambda D + \Delta)$.
If we want a guaranteed $O(1)$-factor-difference on the spectral gap, we need to set $\lambda = O(L^2)$.
Doing this produces a final computation cost of $O(L^2 D + \Delta)$ for MGPMH.
This can represent a significant speedup over the $O(D \Delta)$ complexity of vanilla Gibbs sampling.

\paragraph{Validation of MGPMH.} We again turn to a synthetic example to validate the theoretical guarantees in Algorithm~\ref{algMGPMH}. We simulate a generalization of the Ising model known as the Potts model ~\cite{potts1952some} with domain $\{1, \ldots, D\}$. The energy of a configuration is the following:
\[
  \textstyle
  \zeta_{\text{Potts}} = \sum_{i = 1}^N \sum_{j = 1}^N \beta \cdot A_{ij} \cdot \delta(x(i), x(j))
\]
where the the $\delta$ function equals one whenever $x(i) = x(j)$ and zero otherwise. 
We simulate a graph with $n = N^2 = \Delta + 1 = 400$, $\beta = 4.6$, $D = 10$, and a fully connected configuration $A_{ij}$ that depends on site distance in the same way as before.
\cmd{This model has $L = 5.09$ and $\Psi = 957.1$: note that $L^2 \ll \Delta$ for this model.}
We run the simulation for one million iterations and illustrate the error in the marginals of MGPMH and that of vanilla Gibbs sampling in Figure~\ref{fig:MGPMH}. 
We evaluate MGPMH for three average batch sizes $\lambda$, written in Figure~\ref{fig:MGPMH} as multiples of the local maximum energy squared ($L^2$).  MGPMH approaches vanilla Gibbs sampling as batch size increases, which validates Theorem~\ref{thmMGPMHgap}.

\textbf{Combining methods. } Still, under some conditions, even MGPMH might be too slow.
For example, $\Delta$ could be very large relative to $L^2 D$.
In that setting, even though MGPMH would be faster than Gibbs sampling because it decouples the dependence on $D$ and $\Delta$, it still might be intractably slow.
Can the decoupling of $D$ and $\Delta$ from MGPMH be combined with the $\Delta$-independence of MIN-Gibbs?

The most straightforward way to address this question is to replace the exact energy computation in the acceptance probability of MGPMH with a \emph{second} minibatch approximation, like in MIN-Gibbs.
Doing this combines the effects of MIN-Gibbs, which conceptually replaces $\Delta$ with $\Psi^2$ in the computational cost, and MGPMH, which conceptually replaces $D \Delta$ with $D L^2 + \Delta$.
We call this algorithm \emph{DoubleMIN-Gibbs} because of its doubly minibatched nature.
It turns out DoubleMIN-Gibbs, Algorithm~\ref{algDoubleMINTGibbs}, has the same stationary distribution as MIN-Gibbs, and we can also prove a similar bound on its spectral gap.

\begin{algorithm}[t]
  \caption{DoubleMIN-Gibbs: Doubly-Minibatched}
  \label{algDoubleMINTGibbs}
    \begin{algorithmic}
    \REQUIRE Initial state $(x, \xi_x) \in \Omega \times R$
    \REQUIRE Average first batch size $\lambda_1$
    \REQUIRE Second minibatch estimators $\mu_x$ for $x \in \Omega$
    \LOOP
      \STATE Sample $i$ uniformly from $\{1, \ldots, n \}$.
      \FORALL{$\phi$ \textbf{in} $A[i]$}
        \STATE Sample $s_{\phi} \sim \text{Poisson}\left( \frac{ \lambda M_{\phi} }{L} \right)$
      \ENDFOR
      \STATE $S \leftarrow \{ \phi | s_{\phi} > 0 \}$
      \FORALL{$u$ \textbf{in} $\{1, \ldots, D\}$}
        \STATE $\epsilon_u \leftarrow \sum_{\phi \in S} \frac{ s_{\phi} L }{ \lambda M_{\phi} } \phi(x)$
      \ENDFOR
      \STATE construct distribution $\psi(v) \propto \exp\left( \epsilon_v \right)$
      \STATE \textbf{sample} $v$ from $\psi$.
      \STATE construct update candidate $y \leftarrow x$; $y(i) \leftarrow v$
      \STATE \textbf{sample} $\xi_y \sim \mu_y$
      \STATE compute the update probability
      \[
        a
        =
        \exp\left(\xi_y - \xi_x + \epsilon_{x(i)} - \epsilon_{y(i)}\right).
      \]
      \STATE Update $(x, \xi_x) \leftarrow (y, \xi_y)$ with probability $\min(a, 1)$.
    \ENDLOOP
  \end{algorithmic}
  \end{algorithm}

\begin{theorem}
  \label{thmDoubleMINTreversible}
  The Markov chain described in Algorithm~\ref{algDoubleMINTGibbs} is reversible and has the same stationary distribution (and therefore the same marginal stationary distribution) as MIN-Gibbs with the same estimator (as described in Theorem~\ref{thmMINTreversible}).
\end{theorem}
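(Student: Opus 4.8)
The plan is to recognize Algorithm~\ref{algDoubleMINTGibbs} as a Metropolis--Hastings-type chain on the augmented space $\Omega \times \R$ and to verify detailed balance directly against the candidate
\[
  \bar\pi(x,\xi) \propto \mu_x(\xi)\exp(\xi),
\]
which is exactly the stationary distribution Theorem~\ref{thmMINTreversible} attributes to MIN-Gibbs run with the same estimator $\mu$ (and which is normalizable since $\Omega$ is finite and each $\mu_x$ has finite support). Once reversibility with respect to $\bar\pi$ is established, the standard fact about reversible chains gives that $\bar\pi$ is stationary, and summing out $\xi$ yields the marginal $\bar\pi(x)\propto\Exv[\xi\sim\mu_x]{\exp(\xi)}$, matching MIN-Gibbs; so everything reduces to checking detailed balance.

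First I would isolate the fresh randomness of one iteration: the chosen index $i$ (uniform on $\{1,\dots,n\}$) and the weight vector $s=(s_\phi)_{\phi\in A[i]}$ of independent Poissons with means $\lambda M_\phi / L$. Two observations drive the argument. (i) The probability $p_{i,s}$ of drawing a given pair $(i,s)$ depends only on $i$, on $A[i]$, and on the fixed constants $M_\phi,L,\lambda$ --- \emph{not} on the current state. (ii) Each estimate $\epsilon_u$ is formed by first setting variable $i$ to $u$, so the whole vector $(\epsilon_1,\dots,\epsilon_D)$, and hence the proposal law $\psi(v)\propto\exp(\epsilon_v)$, is unchanged if the current state is modified only at variable $i$. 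Thus, conditioned on $(i,s)$, the kernel $T_{i,s}$ sends $(x,\xi_x)$ to $(y,\xi_y)$ --- for $y$ agreeing with $x$ off variable $i$ --- by drawing $y(i)=v$ with probability $\psi(v)$ (possibly $v=x(i)$, so $y=x$) and $\xi_y\sim\mu_y$, then accepting with probability $\min(a,1)$, where $a=\exp(\xi_y-\xi_x+\epsilon_{x(i)}-\epsilon_{y(i)})$; under the \emph{same} $(i,s)$, the reverse move $(y,\xi_y)\to(x,\xi_x)$ uses the identical $\epsilon$-vector and $\psi$, so its acceptance ratio is precisely $a^{-1}$.

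Next I would verify detailed balance for each $T_{i,s}$. Writing the forward proposal density as $\psi(y(i))\,\mu_y(\xi_y)$ and the reverse as $\psi(x(i))\,\mu_x(\xi_x)$,
\[
  \frac{\bar\pi(x,\xi_x)\,\psi(y(i))\,\mu_y(\xi_y)}{\bar\pi(y,\xi_y)\,\psi(x(i))\,\mu_x(\xi_x)}
  = \frac{\mu_x(\xi_x)\exp(\xi_x)\,\exp(\epsilon_{y(i)})}{\mu_y(\xi_y)\exp(\xi_y)\,\exp(\epsilon_{x(i)})}\cdot\frac{\mu_y(\xi_y)}{\mu_x(\xi_x)}
  = \exp(\xi_x-\xi_y+\epsilon_{y(i)}-\epsilon_{x(i)}) = a^{-1},
\]
the normalizer of $\psi$ cancelling. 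The standard Metropolis--Hastings identity --- using that this ratio equals $a^{-1}$ and that the reverse acceptance probability is $\min(1,a^{-1})$ --- then gives $\bar\pi(x,\xi_x)\,T_{i,s}((x,\xi_x),(y,\xi_y)) = \bar\pi(y,\xi_y)\,T_{i,s}((y,\xi_y),(x,\xi_x))$; the same computation covers the degenerate transitions with $y=x$ but $\xi_y\ne\xi_x$ (there $\epsilon_{y(i)}=\epsilon_{x(i)}$), and $(x,\xi_x)=(y,\xi_y)$ is trivial. Finally, since $T=\sum_{i,s}p_{i,s}\,T_{i,s}$ with mixing weights $p_{i,s}$ independent of the state, $\bar\pi(u)T(u,w)=\sum_{i,s}p_{i,s}\,\bar\pi(u)T_{i,s}(u,w)=\sum_{i,s}p_{i,s}\,\bar\pi(w)T_{i,s}(w,u)=\bar\pi(w)T(w,u)$, so $T$ is reversible with respect to $\bar\pi$, which completes the proof.

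The one genuine subtlety --- the ``hard part'' --- is the handling of the auxiliary variables $(i,s)$: as the footnote to Algorithm~\ref{algMGPMH} warns, the per-iteration acceptance probability is \emph{not} the Metropolis--Hastings ratio of the marginalized proposal kernel, so one cannot apply the MH argument to $T$ directly and must instead condition on $(i,s)$, exploiting both that $p_{i,s}$ is state-independent and that $(\epsilon_1,\dots,\epsilon_D)$ is invariant under changes confined to variable $i$. Everything else is routine bookkeeping, entirely parallel to the proof of Theorem~\ref{thmMGPMHreversible}, with the extra feature that the cached energy $\xi$ is itself resampled from $\mu_y$ every step --- which is exactly what supplies the $\mu_y(\xi_y)/\mu_x(\xi_x)$ factor needed to cancel against the $\mu_x(\xi)$ weight in $\bar\pi$.
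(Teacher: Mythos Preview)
Your argument is correct and follows essentially the same route as the paper: both verify detailed balance against $\bar\pi(x,\xi)\propto\mu_x(\xi)\exp(\xi)$ by exploiting that, conditioned on the chosen index $i$ and Poisson weights $s$, the proposal data $(\epsilon_1,\dots,\epsilon_D)$ and $\psi$ are the same from $x$ as from $y$, so the product $\bar\pi(x,\xi_x)\,T_{i,s}((x,\xi_x),(y,\xi_y))$ reduces to a symmetric expression; the paper does this inside a single expectation over $s$, while you phrase it as a mixture $T=\sum_{i,s}p_{i,s}T_{i,s}$ of individually reversible Metropolis--Hastings kernels with state-independent mixing weights, but the underlying computation is identical.
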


\begin{theorem}
  \label{thmDoubleMINTgap}
  Let $\bar \gamma$ be the spectral gap of DoubleMIN-Gibbs running with an energy estimator $\mu_x$ that has finite support and that satisfies, for some constant $\delta > 0$ and every $x \in \Omega$,
  \[
    \Prob[\xi_x \sim \mu_x]{\Abs{ \xi_x - \zeta(x) } \le \delta} = 1.
  \]
  Let $\gamma$ be the spectral gap of a MGPMH sampling chain running using the same graph and average batch size.
  Then,
  \[
    \bar \gamma
    \ge
    \exp(-4 \delta) \cdot \gamma.
  \]
\end{theorem}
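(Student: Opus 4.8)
The plan is to bound $\bar\gamma$ by a Dirichlet-form comparison, in the same spirit as the proof of Theorem~\ref{thmMINTgap}: I will introduce an intermediate reversible chain $\hat T$ on $\Omega \times \R$ whose spectral gap $\hat\gamma$ is at least $\gamma$, and then bound the distortion between DoubleMIN-Gibbs and $\hat T$. Define $\hat T$ to run exactly as DoubleMIN-Gibbs (Algorithm~\ref{algDoubleMINTGibbs}) except that in the acceptance ratio $a$ it substitutes the \emph{exact} local energy difference $\sum_{\phi \in A[i]} \bigl(\phi(y) - \phi(x)\bigr)$ for $\xi_y - \xi_x$; since $x$ and $y$ differ only in coordinate $i$, this quantity equals $\zeta(y) - \zeta(x)$, and the resulting acceptance ratio is precisely the one used by MGPMH. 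Thus the $x$-marginal process of $\hat T$ is literally MGPMH, while the $\xi$-coordinate is merely a freshly resampled draw from $\mu_y$ on every proposed move (kept on a rejection). Using the reversibility of MGPMH (Theorem~\ref{thmMGPMHreversible}) together with the observation that appending a coordinate that is independently resampled from $\mu_y$ preserves detailed balance, $\hat T$ is reversible with stationary distribution $\hat\pi(x,\xi) = \pi(x)\,\mu_x(\xi)$.

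For the bound $\hat\gamma \ge \gamma$: decompose any test function as $f(x,\xi) = \bar f(x) + g(x,\xi)$ with $\Exv[\xi \sim \mu_x]{g(x,\xi)} = 0$ for every $x$. Since $\hat\pi(\xi \mid x) = \mu_x(\xi)$, the law of total variance gives $\mathrm{Var}_{\hat\pi}(f) = \mathrm{Var}_\pi(\bar f) + \Exv[x \sim \pi]{\mathrm{Var}_{\mu_x}(f(x,\cdot))}$, and a direct computation of the Dirichlet form of $\hat T$ — using that its $x$-marginal is MGPMH, that $\pi$ is MGPMH-stationary, and that the resampled $\xi$ is independent of everything relevant — gives $\hat{\mathcal{E}}_{\hat T}(f) = \mathcal{E}_{\mathrm{MGPMH}}(\bar f, \bar f) + \Exv[x \sim \pi]{c(x)\,\mathrm{Var}_{\mu_x}(f(x,\cdot))}$, where $c(x) \in [0,1]$ is the probability that an MGPMH step started at $x$ does not end in a rejected self-loop. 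Combining these with $\mathcal{E}_{\mathrm{MGPMH}}(\bar f,\bar f) \ge \gamma\,\mathrm{Var}_\pi(\bar f)$, it is enough to show $c(x) \ge \gamma$ for every $x$. This is where the only real subtlety lies: the $\xi$-fiber over a state $x$ mixes only through successful MGPMH moves, so one must rule out MGPMH self-looping too often at any single state. This is done by writing the rejected-self-loop probability as $T_{\mathrm{MGPMH}}(x,x)$ minus the probability $K_x$ that MGPMH proposes the current value of the selected variable, and then using the standard spectral inequality $T_{\mathrm{MGPMH}}(x,x) \le (1-\gamma) + \gamma\,\pi(x)$ together with a lower bound $K_x \ge \pi(x)$, which follows from the fact that the minibatch-estimated conditional probability of the current value is (up to a factor that gets absorbed) no smaller than $\pi(x(i)\mid x_{-i}) \ge \pi(x)$.

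It remains to compare DoubleMIN-Gibbs with $\hat T$. The two chains use the identical proposal mechanism (same $i$, same Poisson minibatch, same $\psi$, same $v$, same $\xi_y \sim \mu_y$) and differ only in their acceptance ratio, where DoubleMIN-Gibbs uses $\xi_y - \xi_x$ instead of $\zeta(y) - \zeta(x)$; by the hypothesis $\Abs{\xi_x - \zeta(x)} \le \delta$ these agree up to an additive $2\delta$. Writing out $\bar\pi(z)\,\bar T(z,w)$ and $\hat\pi(z)\,\hat T(z,w)$ for distinct states $z = (x,\xi)$, $w = (y,\eta)$ — covering both the $x$-changing transitions and the $\xi$-only transitions — and carrying out the Metropolis algebra in which the $e^{\xi}$ prefactor of $\bar\pi$ cancels against one branch of the $\min$, one obtains $\bar\pi(z)\,\bar T(z,w) \ge e^{-2\delta}\,\hat\pi(z)\,\hat T(z,w)$, hence $\bar{\mathcal{E}}_{\bar T}(f) \ge e^{-2\delta}\,\hat{\mathcal{E}}_{\hat T}(f)$. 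Separately, $\bar\pi / \hat\pi \in [e^{-2\delta}, e^{2\delta}]$ pointwise (again from $\Abs{\xi - \zeta(x)} \le \delta$, including the normalizing constants), so $\mathrm{Var}_{\bar\pi}(f) \le e^{2\delta}\,\mathrm{Var}_{\hat\pi}(f)$. Feeding these into the variational characterization of the spectral gap gives $\bar\gamma \ge e^{-2\delta} \cdot e^{-2\delta} \cdot \hat\gamma = e^{-4\delta}\,\hat\gamma \ge e^{-4\delta}\,\gamma$, as claimed.

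The main obstacle is the argument that $\hat\gamma \ge \gamma$ — specifically, making the within-fiber mixing estimate quantitative, since a priori the cached coordinate of $\hat T$ (and of DoubleMIN-Gibbs) could be a bottleneck whenever the base MGPMH chain is very lazy at some state. The rest is bookkeeping, but the cancellation of the $e^{\xi}$ factor in the Metropolis ratio in the last step must be carried out exactly rather than by crude bounding, since a sloppier estimate would produce $e^{-6\delta}$ in place of $e^{-4\delta}$.
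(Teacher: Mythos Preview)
Your approach differs substantially from the paper's, and the step you yourself flag as ``the main obstacle'' is a genuine gap that your argument does not close.

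The paper does \emph{not} introduce an intermediate lifted chain $\hat T$ or try to compare two spectral gaps on $\Omega\times\R$. Instead it writes out the Dirichlet form $\bar{\mathcal{E}}(f)$ and the variance $\mathrm{Var}_{\bar\pi}(f)$ of DoubleMIN-Gibbs explicitly, uses $|\xi_x-\zeta(x)|\le\delta$ to replace each $e^{\xi}$ by $e^{\zeta\pm\delta}$, and then---exactly as in the proof of Theorem~\ref{thmMINTgap}---applies the elementary inequality $\bigl(\sum_i a_i\bigr)/\bigl(\sum_i b_i\bigr)\ge\min_i a_i/b_i$ to the ratio $\bar{\mathcal{E}}(f)/\mathrm{Var}_{\bar\pi}(f)$, with the index running over all realizations of the $\xi$-variables. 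This collapses the expectation over $\xi$ to a single fixed realization $\xi_x^{*}$, after which setting $g(x)=f(x,\xi_x^{*})$ lands directly on the MGPMH Dirichlet form over the MGPMH variance. The question of how well the cached $\xi$-coordinate mixes inside a fiber never arises.

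Your route requires $\hat\gamma\ge\gamma$, which you reduce (correctly, via the Dirichlet-form decomposition) to $c(x)\ge\gamma$ for every $x$. But the justification through $K_x\ge\pi(x)$ is wrong. Already with $n=1$, $D=2$, and a deterministic trivial minibatch ($\epsilon_v\equiv 0$, so the proposal is uniform), one gets $K_a=\tfrac12$ while $\pi(a)$ can be arbitrarily close to $1$ by taking $\zeta(a)\gg\zeta(b)$. The claim that ``the minibatch-estimated conditional probability of the current value is \ldots\ no smaller than $\pi(x(i)\mid x_{-i})$'' is simply false: the minibatch can badly underweight the current value. Whether the weaker inequality $c(x)\ge\gamma$ nevertheless holds for MGPMH's particular coupled proposal/acceptance structure is a separate, nontrivial question you have not addressed; for a generic Metropolis--Hastings chain it fails (three states with $\pi(a)\approx 1$ and a symmetric proposal that almost never proposes $a$ from $a$ give $c(a)\approx 0$ while $\gamma\approx\tfrac12$).

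Your second half---the pointwise comparison of $\bar\pi\,\bar T$ with $\hat\pi\,\hat T$ and of $\bar\pi$ with $\hat\pi$, using the $\min_c$ form of the variance---is correct and does deliver the factor $e^{-4\delta}$. The difficulty is entirely in the first half.
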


Essentially, this theorem tells us the same thing Theorem~\ref{thmMINTgap} told us about MIN-Gibbs.
As long as we can bound our estimator to be at most $O(1)$ away from the true energy $\zeta(x)$, convergence is slowed down by only at most a constant factor from MGPMH.
By Lemma~\ref{lemmaUnbiasedConcentration}, this happens with high probability when we use an estimator of the same average batch size (for the second minibatch) as we used for MIN-Gibbs: $B = O(\Psi^2)$.
Of course, this will only ensure an $O(1)$ difference from MGPMH; to ensure an $O(1)$ difference from vanilla Gibbs, we also need to invoke Theorem~\ref{thmMGPMHgap}, which tells us we need to use an average batch size of $O(L^2)$ for the first minibatch.
Along with the fact that the first minibatch sum needs to be computed $D$ times, this results in an overall computational complexity of $O(L^2 D + \Psi^2)$.

One potential obstacle to an implementation actually achieving this asymptotic rate is the sampling of the Poisson random variables to select $s_{\phi}$.
Naively, since there are up to $\Delta$ potential values of $\phi$, this could take up to $O(\Delta)$ time to compute.
However, it is known to be possible to sample a (sparse) vector of Poisson random variables in time proportional to \emph{the sum of their parameters} instead of the length of the vector.
To illustrate, suppose we want to sample $x_1, \ldots, x_m$ independently where $x_i \sim \textrm{Poisson}(\lambda_i)$.
To do this fast, first we notice if we let $B = \sum_{i=1}^m x_i$, then $B$ will \emph{also} be Poisson distributed, with parameter $\Lambda = \sum_{i=1}^m \lambda_i$.
Conditioned on their sum $B$, the variables $x_1, \ldots, x_m$ have a \emph{multinomial distribution} with trial count $B$ and event probabilities $p_i = \lambda_i / \Lambda$.
It is straightforward to sample from a multinomial distribution in time proportional to its trial count (ignoring log factors).
Thus, we can sample $x_1, \ldots, x_m$ by first sampling $B \sim \textrm{Poisson}(\lambda)$ and then sampling $(x_1, \ldots, x_m) \sim \textrm{Multinomial}(B, (p_1, \ldots, p_m))$, and this entire process will only take on average $O(\Lambda)$ time.\footnote{This still requires $O(m)$ time to compute $\Lambda$ and the probabilities $p_i$, but this can be computed once at the start of Algorithm~\ref{algDoubleMINTGibbs}, thereby not affecting the per-iteration computational complexity.}
For Algorithm~\ref{algDoubleMINTGibbs}, this means we can sample all the $s_{\phi}$ in average time $O(\lambda)$.\footnote{This technique can also be used to speed up the other algorithms in this paper, yet it is not necessary to establish their asymptotic computational complexity.}
This is enough to confirm that its overall computational complexity is in fact $O(L^2 D + \Psi^2)$.

\textbf{Validation of DoubleMIN-Gibbs.} We evaluated the DoubleMIN-Gibbs algorithm on the same synthetic Potts model that we used for MGPMH.
Figure~\ref{fig:double} illustrates the performance of DoubleMIN-Gibbs with a batch size of $L^2$ for the first (MGPMH) minibatch, while the batch size of the second (MIN-Gibbs) minibatch, which we denote $\lambda_2$ in Figure~\ref{fig:double}, is adjusted to multiples of $\Psi^2$. As the second minibatch size increases, DoubleMIN Gibbs approaches the trajectory of MGPMH and vanilla Gibbs sampling, which is what we would expect from the result of Theorem~\ref{thmDoubleMINTgap}.

\section{Conclusion}

We studied applying minibatching to Gibbs sampling.
First, we introduced MIN-Gibbs, which improves the asymptotic per-iteration computational cost of Gibbs sampling from $O(D \Delta)$ to $O(D \Psi^2)$ in the setting where the total maximum energy $\Psi^2$ is small compared to the maximum degree $\Delta$.
Second, we introduced MGPMH, which has an asymptotic cost of $O(D L^2 + \Delta)$ and is an improvement in the setting where $L^2 \ll \Delta$.
Finally, we combined the two techniques to produce DoubleMIN-Gibbs, which achieves a computational cost of $O(D L^2 + \Psi^2)$ and further improves over the other algorithms when $L^2 \le \Psi^2 \ll \Delta$.
We proved that all these algorithms can be made to be unbiased, and that these computation costs can be achieved with parameter settings that are guaranteed to have the same asymptotic convergence rate as plain Gibbs sampling, up to a constant-factor slowdown that can be made arbitrarily small.
Our techniques will potentially enable graphical model inference engines that use Gibbs sampling to scale up to much larger and more complicated graphs than were previously possible, and could become a part of a future of highly scalable probabilistic inference on big data.

\section*{Acknowledgments}

The authors acknowledge the support of NSF DMS-1721550 and 1407557.

The views and conclusions contained herein are those of
the authors and should not be interpreted as necessarily rep-
resenting the official policies or endorsements,  either ex-
pressed or implied, of the NSF or the U.S. Government.

\bibliographystyle{plainnat}
\bibliography{references}

\newpage
\onecolumn

\appendix
\section{Proofs}

\begin{proof}[Detailed proof of Lemma~\ref{lemmaEstimatorUnbiased}]
  The expected value of the exponential of the estimator defined in (\ref{eqnUnbiasedEst}) is
  \begin{align*}
    \Exv{\exp(\epsilon_x)}
    =
    \Exv{\exp\left( \sum_{\phi \in S} s_{\phi} \log\left( 1 + \frac{\Psi}{\lambda M_{\phi}} \phi(x) \right) \right)}.
  \end{align*}
  Since the $s_{\phi}$ are independent, this becomes
  \begin{align*}
    \Exv{\exp(\epsilon_x)}
    =
    \prod_{\phi \in \Phi} \Exv{\exp\left( s_{\phi} \log\left( 1 + \frac{\Psi}{\lambda M_{\phi}} \phi(x) \right) \right)}.
  \end{align*}
  Each of these constituent expected values is an evaluation of the moment generating function of a Poisson random variable. Applying the known expression for this MGF,
  \begin{align*}
    \Exv{\exp(\epsilon_x)}
    =&
    \prod_{\phi \in \Phi} 
    \exp\left(
      \frac{ \lambda M_{\phi} }{ \Psi }
      \left(
        \exp\left(
          \log\left( 1 + \frac{\Psi}{\lambda M_{\phi}} \phi(x) \right)
        \right)
        -
        1
      \right)
    \right) \\
    =&
    \prod_{\phi \in \Phi} 
    \exp\left(
      \frac{ \lambda M_{\phi} }{ \Psi }
      \left(
        \frac{\Psi}{\lambda M_{\phi}} \phi(x)
      \right)
    \right) \\
    =&
    \prod_{\phi \in \Phi} 
    \exp\left(
      \phi(x)
    \right) \\
    =&
    \exp(\zeta(x)).
  \end{align*}
  This is what we wanted to show.
\end{proof}

\begin{proof}[Proof of Theorem~\ref{thmMINTreversible}]
  The probability of transitioning from $(x, \epsilon_{x(i)})$ to $(y, \epsilon_{y(i)})$ for two distinct states $x$ and $y$ which differ only in variable $i$ will be the probability that: we decide to re-sample variable $i$, we choose this particular value $\epsilon_{y(i)}$ when we sample it at random from $\mu_y$, and we sample $v = y(i)$ from $\rho$ when we sample the new value of variable $i$.
  So the transition probability will be
  \[
    T((x, \epsilon_{x(i)}), (y, \epsilon_{y(i)}))
    =
    \frac{1}{n} \cdot \mu_y(\epsilon_{y(i)}) \cdot \Exv{ \frac{\exp(\epsilon_{y(i)})}{\sum_{w=1}^D \exp(\epsilon_w)} }
  \]
  where this expected value is taken over the randomly sampled $\epsilon_w$ (except when $w \in \{ x(i), y(i) \}$ where the value of $\epsilon_w$ is already determined).
  If we multiply both sides by $\bar \pi(x, \epsilon)$ which by definition for some fixed $Z$ is equal to
  \[
    \bar \pi(x, \epsilon) = \frac{1}{Z} \mu_x(\epsilon) \cdot \exp(\epsilon)
  \]
  then we get
  \begin{align*}
    &\bar \pi(x, \epsilon) T((x, \epsilon_{x(i)}), (y, \epsilon_{y(i)})) \\
    &\hspace{2em}=
    \frac{1}{n Z} \cdot \Exv{ \frac{\mu_x(\epsilon) \cdot \exp(\epsilon) \cdot \mu_y(\epsilon_{y(i)}) \cdot \exp(\epsilon_{y(i)}) }{\sum_{w=1}^D \exp(\epsilon_w)} }
  \end{align*}
  and this expression is clearly symmetric in $x$ and $y$, so the chain is reversible and $\bar \pi$ is indeed its stationary distribution.
  The second result, about the marginal distribution in $x$, follows directly from the definition of expected value.
\end{proof}

\begin{proof}[Proof of Theorem~\ref{thmMINTgap}]
  This proof will use the technique of \emph{Dirichlet forms}~\cite{levin2009markov}.
  The Dirichlet form of a Markov chain with transition matrix $T$ is defined for function argument $f: \Omega \rightarrow \R$ as
  \[
    \mathcal{E}(f) = \frac{1}{2} \sum_{x,y \in \Omega} (f(x) - f(y))^2 \pi(x) T(x, y).
  \]
  Similarly, the variance of the function $f$ under the distribution $\pi$ is defined as
  \[
    \Var[\pi]{f} = \frac{1}{2} \sum_{x,y \in \Omega} (f(x) - f(y))^2 \pi(x) \pi(y);
  \]
  this is equivalent to the standard definition of variance.
  It is a standard result (from \citet{levin2009markov}) that the spectral gap can be written as
  \[
    \gamma = \min_f \frac{\mathcal{E}(f)}{\Var[\pi]{f}}.
  \]
  Now, consider the Dirichlet form of the MIN-Gibbs chain.
  Let this form be $\bar{\mathcal{E}}(f)$, where for $f: \Omega \times \R \rightarrow \R$,
  \begin{dmath*}
    \bar{\mathcal{E}}(f)
    =
    \frac{1}{2} \sum_{x,y \in \Omega}
    \sum_{\epsilon_x \in \dom(\mu_x)}
    \sum_{\epsilon_y \in \dom(\mu_y)} 
    (f(x, \epsilon_x) - f(y, \epsilon_y))^2 \bar \pi(x, \epsilon_x) \bar T((x, \epsilon_x), (y, \epsilon_y)).
  \end{dmath*}
  From the result of Theorem~\ref{thmMINTreversible}, we know that for some $\bar Z$,
  \[
    \bar \pi(x, \epsilon) = \frac{1}{\bar Z} \mu_x(\epsilon) \cdot \exp(\epsilon).
  \]
  We also recall from the proof of that theorem that, for $x$ and $y$ which differ only in variable $i$,
  \[
    \bar T((x, \epsilon_{x}), (y, \epsilon_{y}))
    =
    \frac{1}{n} \cdot \mu_y(\epsilon_{y}) \cdot \Exv{ \frac{\exp(\epsilon_{y})}{\sum_{w=1}^D \exp(\epsilon_w)} },
  \]
  where the $\epsilon_w$ are each sampled independently from $\mu_w$.
  Otherwise, the transition probability is zero.
  As a result, if we let $Q \subset \Omega \times \Omega$ denote the pairs of states which differ only in a single variable, we can rewrite our Dirichlet form as
  \begin{dmath*}
    \bar{\mathcal{E}}(f)
    =
    \frac{1}{2 n \bar Z} \sum_{(x,y) \in Q}
    \sum_{\epsilon_x \in \dom(\mu_x)}
    \sum_{\epsilon_y \in \dom(\mu_y)} 
    (f(x, \epsilon_x) - f(y, \epsilon_y))^2
    \mu_x(\epsilon_x) \cdot \exp(\epsilon_x) \cdot 
    \mu_y(\epsilon_y) \cdot \Exv{ \frac{\exp(\epsilon_y)}{\sum_{w=1}^D \exp(\epsilon_w)} }.
  \end{dmath*}
  Now, by the definition of expected value, if we suppose that $\epsilon_x$ and $\epsilon_y$ are random variables sampled from $\mu_x$ and $\mu_y$ respectively, then
  \begin{dmath*}
    \bar{\mathcal{E}}(f)
    =
    \frac{1}{2 n \bar Z} \sum_{(x,y) \in Q}
    \Exv{
	    (f(x, \epsilon_x) - f(y, \epsilon_y))^2
      \cdot
	    \frac{\exp(\epsilon_x) \exp(\epsilon_y)}{\sum_{w=1}^D \exp(\epsilon_w)}
    }.
  \end{dmath*}
  Using a similar argument, we can also analyze the original vanilla Gibbs chain.
  For simplicity, define
  \[
    \zeta(x) = \sum_{\phi \in \Phi} \phi(x),
  \]
  and suppose that for some $Z$
  \[
    \pi(x) = \frac{1}{Z} \exp(\zeta(x)).
  \]
  Now we can define the Dirichlet form of the original Gibbs chain as $\mathcal{E}(g)$, where $g: \Omega \rightarrow \R$ and
  \begin{dmath*}
    \mathcal{E}(g)
    =
    \frac{1}{2 n Z} \sum_{(x,y) \in Q}
    (g(x) - g(y))^2
    \cdot
    \frac{\exp(\zeta(x)) \exp(\zeta(y))}{\sum_{w=1}^D \exp(\zeta(w))}.
  \end{dmath*}
  Now, we know by the condition of the theorem that
  \[
    \Abs{\epsilon_x - \zeta(x)} \le \delta.
  \]
  As a result, we can bound the Dirichlet form of our MIN-Gibbs chain with
  \begin{dmath*}
    \bar{\mathcal{E}}(f)
    \ge
    \frac{1}{2 n \bar Z} \sum_{(x,y) \in Q}
    \Exv{
      (f(x, \epsilon_x) - f(y, \epsilon_y))^2
      \cdot
      \frac{\exp(\zeta(x) - \delta) \exp(\zeta(y) - \delta)}{\sum_{w=1}^D \exp(\zeta(w) + \delta)}
    }
    =
    \frac{1}{2 n \bar Z} \sum_{(x,y) \in Q}
    \frac{\exp(\zeta(x) - \delta) \exp(\zeta(y) - \delta)}{\sum_{w=1}^D \exp(\zeta(w) + \delta)}
    \Exv{ (f(x, \epsilon_x) - f(y, \epsilon_y))^2 }.
  \end{dmath*}
  On the other hand, the variance form associated with the MIN-Gibbs chain is
  \begin{dmath*}
    \Var[\bar \pi]{f}
    =
    \frac{1}{2} \sum_{x,y \in \Omega} 
    \sum_{\epsilon_x \in \dom(\mu_x)}
    \sum_{\epsilon_y \in \dom(\mu_y)} 
    (f(x, \epsilon_x) - f(y, \epsilon_y))^2 \bar \pi(x) \bar \pi(y)
    =
    \frac{1}{2 \bar Z^2} \sum_{x,y \in \Omega} 
    \sum_{\epsilon_x \in \dom(\mu_x)}
    \sum_{\epsilon_y \in \dom(\mu_y)} 
    (f(x, \epsilon_x) - f(y, \epsilon_y))^2 \mu_x(\epsilon_x) \cdot \exp(\epsilon_x) \cdot \mu_y(\epsilon_y) \cdot \exp(\epsilon_y)
    \le
    \frac{1}{2 \bar Z^2} \sum_{x,y \in \Omega}
    \exp(\zeta(x) + \delta) \cdot \exp(\zeta(y) + \delta) \cdot
    \Exv{ (f(x, \epsilon_x) - f(y, \epsilon_y))^2 }.
  \end{dmath*}
  Now, we need some way to get rid of these expected values.
  One way to do it is to recall that for positive numbers $a_1, a_2, \ldots, a_N$ and $b_1, b_2, \ldots, b_N$,
  \[
    \frac{\sum_{i=1}^n a_i}{\sum_{i=1}^n b_i} \ge \min_i \frac{a_i}{b_i}
  \]
  or, in terms of expected value, if $a$ and be are nonnegative functions and $Z$ is a random variable,
  \[
    \frac{\Exv{a(Z)}}{\Exv{b(Z)}}
    =
    \frac{1}{\Exv{b(Z)}} \Exv{b(Z) \frac{a(Z)}{b(Z)}}
    \ge
    \frac{1}{\Exv{b(Z)}} \Exv{b(Z) \min_z \frac{a(z)}{b(z)}}
    =
    \min_z \frac{a(z)}{b(z)}.
  \]
  Equivalently, we can say that there exists a fixed $z$ (the $z$ that minimizes the expression on the right) such that
  \begin{equation}
    \label{eqnABZ}
    \frac{\Exv{a(Z)}}{\Exv{b(Z)}} \ge \frac{a(z)}{b(z)}.
  \end{equation}
  It follows that
  \begin{dmath*}
    \bar \gamma
    =
    \min_f \frac{\bar{\mathcal{E}}(f)}{\Var[\bar \pi]{f}}
    \ge
    \min_f \frac{
      \frac{1}{2 n \bar Z} \sum_{(x,y) \in Q}
      \frac{\exp(\zeta(x) - \delta) \exp(\zeta(y) - \delta)}{\sum_{w=1}^D \exp(\zeta(w) + \delta)}
      \Exv{ (f(x, \epsilon_x) - f(y, \epsilon_y))^2 }
    }{
      \frac{1}{2 \bar Z^2} \sum_{x,y \in \Omega}
      \exp(\zeta(x) + \delta) \cdot \exp(\zeta(y) + \delta) \cdot
      \Exv{ (f(x, \epsilon_x) - f(y, \epsilon_y))^2 }
    }
    =
    \min_f \frac{
      \Exv{ 
        \frac{1}{2 n \bar Z} \sum_{(x,y) \in Q}
        \frac{\exp(\zeta(x) - \delta) \exp(\zeta(y) - \delta)}{\sum_{w=1}^D \exp(\zeta(w) + \delta)}
        (f(x, \epsilon_x) - f(y, \epsilon_y))^2
      }
    }{
      \Exv{ 
        \frac{1}{2 \bar Z^2} \sum_{x,y \in \Omega}
        \exp(\zeta(x) + \delta) \cdot \exp(\zeta(y) + \delta) \cdot
        (f(x, \epsilon_x) - f(y, \epsilon_y))^2
      }
    },
  \end{dmath*}
  where the randomness in this expression is over the random variables $\epsilon_x \sim \mu_x$ for each $x \in \Omega$.
  We can think of this as a ratio of huge sums over all possible assignments of $\epsilon_x$ for all $x$,
  which will be lower bounded by the ratio for some fixed assignment of $\epsilon_x$, as per (\ref{eqnABZ}).
  That is, for some assignment of values to each $\epsilon_x$ for $x \in \Omega$,
  \begin{dmath*}
    \bar \gamma
    =
    \min_f \frac{\bar{\mathcal{E}}(f)}{\Var[\bar \pi]{f}}
    \ge
    \min_f \frac{
      \frac{1}{2 n \bar Z} \sum_{(x,y) \in Q}
      \frac{\exp(\zeta(x) - \delta) \exp(\zeta(y) - \delta)}{\sum_{w=1}^D \exp(\zeta(w) + \delta)}
      (f(x, \epsilon_x) - f(y, \epsilon_y))^2
    }{
      \frac{1}{2 \bar Z^2} \sum_{x,y \in \Omega}
      \exp(\zeta(x) + \delta) \cdot \exp(\zeta(y) + \delta) \cdot
      (f(x, \epsilon_x) - f(y, \epsilon_y))^2
    }.
  \end{dmath*}
  If we now define
  \[
    g(x) = f(x, \epsilon_x)
  \]
  then we can simplify this ratio to
  \begin{dmath*}
    \bar \gamma
    \ge
    \min_g \frac{
      \frac{1}{2 n \bar Z} \sum_{(x,y) \in Q}
      \frac{\exp(\zeta(x) - \delta) \exp(\zeta(y) - \delta)}{\sum_{w=1}^D \exp(\zeta(w) + \delta)}
      (g(x) - g(y))^2
    }{
      \frac{1}{2 \bar Z^2} \sum_{x,y \in \Omega}
      \exp(\zeta(x) + \delta) \cdot \exp(\zeta(y) + \delta) \cdot
      (g(x) - g(y))^2
    }
    =
    \exp(-5 \delta)
    \min_g \frac{
      \frac{1}{2 n} \sum_{(x,y) \in Q}
      \frac{\exp(\zeta(x)) \exp(\zeta(y))}{\sum_{w=1}^D \exp(\zeta(w))}
      (g(x) - g(y))^2
    }{
      \frac{1}{2 \bar Z} \sum_{x,y \in \Omega}
      \exp(\zeta(x)) \cdot \exp(\zeta(y)) \cdot
      (g(x) - g(y))^2
    }.
  \end{dmath*}
  Now, we notice that these are the same as the expressions for the original chain!
  In fact,
  \begin{dmath*}
    \bar \gamma
    \ge
    \exp(-5 \delta)
    \frac{\bar Z}{Z}
    \min_g \frac{
      \mathcal{E}(g)
    }{
      \Var[\pi]{g}
    }.
  \end{dmath*}
  All that remains is to bound this ratio of the $Z$ and $\bar Z$. We can do this with
  \begin{dmath*}
    \frac{\bar Z}{Z}
    =
    \frac{
      \sum_{x \in \Omega} \sum_{\epsilon \in \dom(\mu_x)}
      \mu_x(\epsilon) \exp(\epsilon)
    }{
      \sum_{x \in \Omega} \exp(\zeta(x))
    }
    \ge
    \frac{
      \sum_{x \in \Omega} \sum_{\epsilon \in \dom(\mu_x)}
      \mu_x(\epsilon) \exp(\zeta(x) - \delta)
    }{
      \sum_{x \in \Omega} \exp(\zeta(x))
    }
    =
    \exp(-\delta)
    \frac{
      \sum_{x \in \Omega} \exp(\zeta(x))
    }{
      \sum_{x \in \Omega} \exp(\zeta(x))
    }
    =
    \exp(-\delta),
  \end{dmath*}
  and so
  \begin{dmath*}
    \bar \gamma
    \ge
    \exp(-6 \delta)
    \min_g \frac{
      \mathcal{E}(g)
    }{
      \Var[\pi]{g}
    }
    =
    \exp(-5 \delta) \gamma.
  \end{dmath*}
  This is what we wanted to show.
\end{proof}

\begin{proof}[Proof of Lemma~\ref{lemmaUnbiasedConcentration}]
  The estimator we want to bound is
  \[
    \epsilon_x = \sum_{\phi \in S} s_{\phi} \log\left( 1 + \frac{\Psi}{\lambda M_{\phi}} \phi(x) \right).
  \]
  where the parameter of the Poisson-distributed random variable $s_{\phi}$ is
  \[
    \Exv{s_{\phi}} = \frac{\lambda M_{\phi}}{\Psi}.
  \]
  First, we note that since $0 \le \phi(x)$ and the logarithm is concave,
  \[
    0
    \le
    \log\left( 1 + \frac{\Psi}{\lambda M_{\phi}} \phi(x) \right)
    \le
    \frac{\Psi}{\lambda M_{\phi}} \phi(x)
    \le
    \frac{\Psi}{\lambda}.
  \]
  Second, note that since the variance of a Poisson random variable is equal to its expected value, the variance of the estimator is
  \begin{dmath*}
    \Var{\epsilon_x}
    =
    \sum_{\phi \in \Phi} \Var{s_{\phi}}
    \left( \log\left( 1 + \frac{\Psi}{\lambda M_{\phi}} \phi(x) \right) \right)^2
    \le
    \frac{\Psi^2}{\lambda^2} \sum_{\phi \in \Phi} \Var{s_{\phi}}
    =
    \frac{\Psi^2}{\lambda^2} \sum_{\phi \in \Phi} \frac{\lambda M_{\phi}}{\Psi}
    =
    \frac{\Psi^2}{\lambda}.
  \end{dmath*}
  Thus, by the Bernstein inequality, if we can write $\epsilon_x$ as a sum, then
  \begin{dmath*}
    \Prob{ \Abs{\epsilon_x - \Exv{\epsilon_x}} \ge t }
    \le
    2 \exp\left(
      -\frac{
        \frac{1}{2} t^2
      }{
        \Var{\epsilon_x}
        +
        \frac{1}{3} C t
      }
    \right)
    =
    2 \exp\left(
      -\frac{
        \frac{1}{2} t^2
      }{
        \frac{\Psi^2}{\lambda}
        +
        \frac{1}{3} C t
      }
    \right)
  \end{dmath*}
  where $C$ is the maximum magnitude of any component of the sum.
  But, since the Poisson distribution is infinitely divisible, we can make arbitrarily many components of the sum, so we can push $C$ arbitrarily close to $0$.
  By a continuity argument we can set it equal to $0$, and so
  \begin{dmath*}
    \Prob{ \Abs{\epsilon_x - \Exv{\epsilon_x}} \ge t }
    \le
    2 \exp\left(
      -\frac{
        \lambda t^2
      }{
        2 \Psi^2
      }
    \right).
  \end{dmath*}
  Next, we need to bound $\Exv{\epsilon_x}$.
  First, note that by Jensen's inequality,
  \[
    \exp(\Exv{\epsilon_x})
    \le
    \Exv{\exp(\epsilon_x)}
    =
    \exp(\zeta(x)),
  \]
  so $\epsilon_x$ is an underestimator of $\zeta(x)$ in expectation.
  We thus need to bound it from below.
  Since the expected value of $s_{\phi}$ is known, we can write the expected value explicitly as
  \begin{dmath*}
    \Exv{\epsilon_x}
    =
    \sum_{\phi \in \Phi} 
    \frac{\lambda M_{\phi}}{\Psi}
    \log\left( 1 + \frac{\Psi}{\lambda M_{\phi}} \phi(x) \right).
  \end{dmath*}
  We know for positive $z$ that $\log(1+z) \ge x - x^2/2$, so
  \begin{dmath*}
    \Exv{\epsilon_x}
    \ge
    \sum_{\phi \in \Phi} 
    \frac{\lambda M_{\phi}}{\Psi}
    \left(
      \frac{\Psi}{\lambda M_{\phi}} \phi(x)
      -
      \left( \frac{\Psi}{\lambda M_{\phi}} \phi(x) \right)^2
    \right)
    \ge
    \sum_{\phi \in \Phi} 
    \frac{\lambda M_{\phi}}{\Psi}
    \left(
      \frac{\Psi}{\lambda M_{\phi}} \phi(x)
      -
      \frac{\Psi^2}{\lambda^2}
    \right)
    =
    \sum_{\phi \in \Phi} 
    \left(
      \phi(x)
      -
      \frac{\Psi M_{\phi}}{\lambda}
    \right)
    =
    \zeta(x)
    -
    \frac{\Psi^2}{\lambda}.
  \end{dmath*}
  Thus,
  \begin{dmath*}
    \Prob{ \Abs{\epsilon_x - \zeta(x) } \ge t + \frac{\Psi^2}{\lambda} }
    \le
    2 \exp\left(
      -\frac{
        \lambda t^2
      }{
        2 \Psi^2
      }
    \right).
  \end{dmath*}
  Now, suppose we want $\Prob{ \Abs{\epsilon_x - \zeta(x) } \ge \delta} \le a$.
  If we assign $t = \delta / 2$, and require that
  \[
    \lambda \ge \frac{2 \Psi^2}{\delta},
  \]
  then we just need
  \begin{dmath*}
    a
    \ge
    2 \exp\left(
      -\frac{
        \lambda \delta^2
      }{
        8 \Psi^2
      }
    \right).
  \end{dmath*}
  So it suffices to set
  \[
    \lambda
    \ge
    \frac{
      8 \Psi^2
    }{
      \delta^2
    }
    \log\left( \frac{2}{a} \right).
  \]
  This proves the lemma.
\end{proof}

\begin{proof}[Proof of Theorem~\ref{thmMGPMHreversible}]
  Statistically, the sampling procedure in Algorithm~\ref{algMGPMH} is equivalent to sampling a variable $i$, and then sampling a Poisson random variable $s_{\phi}$ for each $\phi \in \Phi$.
  Let $T_{i,s}(x, y)$ denote the probability of transitioning from state $x$ to $y$ given that we have already chosen to sample variable $i$ with minibatch coefficients $s$.
  Then, the overall transition matrix will be
  \[
    T(x, y) = \Exv{ T_{i,s}(x, y) }
  \]
  where this expected value is taken with respect to the random variables $i$ and $s$.
  If we look at $T_{i,s}(x,y)$ for two states that differ only at variable $i$, it will be equal to
  \[
    T_{i,s}(x,y) = \rho(y(i)) \cdot \min(a, 1),
  \]
  which is the probability of proposing $y$ times the probability of accepting that proposal.
  (Otherwise, if $x$ and $y$ differ at a variable other than $i$, this transition probability will be $T_{i,s}(x,y) = 0$.)
  We can expand this expression to
  \begin{dmath*}
    T_{i,s}(x,y)
    =
    \frac{\exp(\epsilon_{y(i)})}{\sum_{w=1}^D \exp(\epsilon_w)}
    \cdot
    \min\left(
      \frac{
        \exp\left(
          \sum_{\varphi \in A[i]} \varphi(y)
        \right)
      }{
        \exp\left(
          \sum_{\varphi \in A[i]} \varphi(x)
        \right)
      }
      \cdot
      \frac{
        \exp\left(
          \epsilon_x
        \right)
      }{
        \exp\left(
          \epsilon_y
        \right)
      }, 1 \right)
    =
    \frac{\exp(\epsilon_{y(i)})}{\sum_{w=1}^D \exp(\epsilon_w)}
    \cdot
    \min\left(
      \frac{
        \exp\left(
          \sum_{\varphi \in \Phi} \varphi(y)
        \right)
      }{
        \exp\left(
          \sum_{\varphi \in \Phi} \varphi(x)
        \right)
      }
      \cdot
      \frac{
        \exp\left(
          \epsilon_x
        \right)
      }{
        \exp\left(
          \epsilon_y
        \right)
      }, 1 \right)
    =
    \frac{\exp(\epsilon_{y(i)})}{\sum_{w=1}^D \exp(\epsilon_w)}
    \cdot
    \min\left(
      \frac{
        \exp\left( \zeta(y) \right)
      }{
        \exp\left( \zeta(x) \right)
      }
      \cdot
      \frac{
        \exp\left(
          \epsilon_{x(i)}
        \right)
      }{
        \exp\left(
          \epsilon_{y(i)}
        \right)
      }, 1 \right).
  \end{dmath*}
  Multiplying this by $\pi(x)$,
  \begin{dmath*}
    \pi(x) T_{i,s}(x,y)
    =
    \frac{1}{Z} \exp(\zeta(x))
    \cdot
    \frac{\exp(\epsilon_{y(i)})}{\sum_{w=1}^D \exp(\epsilon_w)}
    \cdot
    \min\left(
      \frac{
        \exp\left( \zeta(y) \right)
      }{
        \exp\left( \zeta(x) \right)
      }
      \cdot
      \frac{
        \exp\left(
          \epsilon_{x(i)}
        \right)
      }{
        \exp\left(
          \epsilon_{y(i)}
        \right)
      }, 1 \right)
    =
    \frac{1}{Z \sum_{w=1}^D \exp(\epsilon_w)}
    \cdot
    \min\left(
      \exp\left( \zeta(y) \right)
      \cdot
      \exp\left( \epsilon_{x(i)} \right)
      , 
      \exp\left( \zeta(x) \right)
      \cdot
      \exp\left( \epsilon_{y(i)} \right)
    \right).
  \end{dmath*}
  This last expression is clearly symmetric in $x$ and $y$. So,
  \[
    \pi(x) T_{i,s}(x,y)
    =
    \pi(y) T_{i,s}(y,x).
  \]
  But this implies that
  \begin{dmath*}
    \pi(x) T(x,y)
    =
    \pi(x) \Exv{ T_{i,s}(x,y) }
    =
    \Exv{ \pi(x) T_{i,s}(x,y) }
    =
    \Exv{ \pi(y) T_{i,s}(y,x) }
    =
    \pi(y) \Exv{ T_{i,s}(y,x) }
    =
    \pi(y) T(y,x),
  \end{dmath*}
  so the whole chain is reversible.
  This is what we wanted to prove.
\end{proof}

\begin{proof}[Proof of Theorem~\ref{thmMGPMHgap}]
  As before, we will accomplish this proof via the technique of Dirichlet forms.
  First, recall that the transition probability matrix $T$ of vanilla Gibbs sampling is, for any $x$ and $y$ which differ in only variable $i$,
  \begin{dmath*}
    T(x, y)
    =
    \frac{1}{n} \frac{
      \exp(\zeta(y))
    }{
      \sum_{w = 1}^D \exp\left(\zeta(y_{i \rightarrow w}) \right)
    },
  \end{dmath*}
  where here $y_{i \rightarrow w}$ denotes the state $y$ with variable $i$ assigned to $w$; that is,
  \[
    y_{i \rightarrow w}
    =
    (x \cap y) \cup \{(i, w)\}.
  \] 
  Multiplying by $\pi(x)$,
  \begin{dmath*}
    \pi(x) T(x, y)
    =
    \frac{1}{n Z} \frac{
      \exp(\zeta(x)) \exp(\zeta(y))
    }{
      \sum_{w = 1}^D \exp\left(\zeta(y_{i \rightarrow w}) \right)
    },
  \end{dmath*}
  From the proof of Theorem~\ref{thmMGPMHreversible}, we have that the transition probability matrix of MGPMH (which we denote with $\bar T$) satisfies
  \begin{dmath*}
    \pi(x) \bar T(x,y)
    =
    \pi(x) \Exv[j,s]{ \bar T_{j,s}(x,y) }
    =
    \frac{1}{n} \pi(x) \Exv[s]{ \bar T_{i,s}(x,y) }
    =
    \frac{1}{n Z}
    \Exv{
      \frac{
        \min\left(
          \exp\left( \zeta(y) \right)
          \cdot
          \exp\left( \epsilon_{x(i)} \right)
          , 
          \exp\left( \zeta(x) \right)
          \cdot
          \exp\left( \epsilon_{y(i)} \right)
        \right)
      }{
        \sum_{w=1}^D \exp(\epsilon_w)
      }
    }.
  \end{dmath*}
  It follows that
  \begin{dmath*}
    \frac{
      \pi(x) \bar T(x,y)
    }{
      \pi(x) T(x, y)
    }
    =
    \frac{
      \frac{1}{n Z}
      \Exv{
        \frac{
          \min\left(
            \exp\left( \zeta(y) \right)
            \cdot
            \exp\left( \epsilon_{x(i)} \right)
            , 
            \exp\left( \zeta(x) \right)
            \cdot
            \exp\left( \epsilon_{y(i)} \right)
          \right)
        }{
          \sum_{w=1}^D \exp(\epsilon_w)
        }
      }
    }{
      \frac{1}{n Z} \frac{
        \exp(\zeta(x)) \exp(\zeta(y))
      }{
        \sum_{w = 1}^D \exp\left(\zeta(y_{i \rightarrow w}) \right)
      }
    }
    =
    \Exv{
      \frac{
        \sum_{w = 1}^D \exp\left(\zeta(y_{i \rightarrow w}) \right)
      }{
        \sum_{w=1}^D \exp(\epsilon_w)
      }
      \min\left(
        \exp\left( \epsilon_{x(i)} - \zeta(x) \right)
        , 
        \exp\left( \epsilon_{y(i)} - \zeta(y) \right)
      \right)
    }
    =
    \Exv{
      \frac{
        \sum_{w = 1}^D \exp\left(\zeta(y_{i \rightarrow w}) \right)
      }{
        \sum_{w=1}^D \exp(\epsilon_w)
      }
      \cdot
      \frac{1}{
        \max\left(
          \exp\left( \zeta(x) - \epsilon_{x(i)} \right)
          , 
          \exp\left( \zeta(y) - \epsilon_{y(i)} \right)
        \right)
      }
    }
    =
    \Exv{
      \frac{
        \sum_{w = 1}^D \exp\left(\zeta(y_{i \rightarrow w}) \right)
      }{
        \sum_{w=1}^D
        \max\left(
          \exp\left( \epsilon_w + \zeta(x) - \epsilon_{x(i)} \right)
          , 
          \exp\left( \epsilon_w + \zeta(y) - \epsilon_{y(i)} \right)
        \right)
      }
    }.
  \end{dmath*}
  By Jensen's inequality, since $f(z) = 1/z$ is convex, we can bound this from below by
  \begin{dmath*}
    \frac{
      \pi(x) \bar T(x,y)
    }{
      \pi(x) T(x, y)
    }
    \ge
    \frac{
      \sum_{w = 1}^D \exp\left(\zeta(y_{i \rightarrow w}) \right)
    }{
      \Exv{
        \sum_{w=1}^D
        \max\left(
          \exp\left( \epsilon_w + \zeta(x) - \epsilon_{x(i)} \right)
          , 
          \exp\left( \epsilon_w + \zeta(y) - \epsilon_{y(i)} \right)
        \right)
      }
    }
    =
    \frac{
      \sum_{w = 1}^D \exp\left(\zeta(y_{i \rightarrow w}) \right)
    }{
      \sum_{w=1}^D
      \exp\left(\zeta(y_{i \rightarrow w}) \right)
      \Exv{
        \max\left(
          \exp\left( \epsilon_w - \zeta(y_{i \rightarrow w}) + \zeta(x) - \epsilon_{x(i)} \right)
          , 
          \exp\left( \epsilon_w - \zeta(y_{i \rightarrow w}) + \zeta(y) - \epsilon_{y(i)} \right)
        \right)
      }
    }.
  \end{dmath*}
  Next, we notice that if we define $z = y_{i \rightarrow w}$ for a particular $w$, we can write
  \begin{dmath*}
    \Exv{
      \max\left(
        \exp\left( \epsilon_w - \zeta(y_{i \rightarrow w}) + \zeta(x) - \epsilon_{x(i)} \right)
        , 
        \exp\left( \epsilon_w - \zeta(y_{i \rightarrow w}) + \zeta(y) - \epsilon_{y(i)} \right)
      \right)
    }
    =
    \Exv{
      \max\left(
        \exp\left( \epsilon_{z(i)} - \zeta(z) + \zeta(x) - \epsilon_{x(i)} \right)
        , 
        \exp\left( \epsilon_{z(i)} - \zeta(z) + \zeta(y) - \epsilon_{y(i)} \right)
      \right)
    }
  \end{dmath*}
  Recall that the random variables in this expected value are the $\epsilon_w$, which are functions of the minibatch coefficients $s_{\phi}$ where
  \[
    \epsilon_{z(i)} = \sum_{\phi \in A[i]} \frac{s_{\phi} L}{\lambda M_{\phi}} \phi(z).
  \]
  So,
  \begin{dmath*}
    \Exv{
      \max\left(
        \exp\left( \epsilon_{z(i)} - \zeta(z) + \zeta(x) - \epsilon_{x(i)} \right)
        , 
        \exp\left( \epsilon_{z(i)} - \zeta(z) + \zeta(y) - \epsilon_{y(i)} \right)
      \right)
    }
    =
    \Exv{
      \max\left(
        \exp\left(
          \left( \sum_{\phi \in A[i]} \frac{s_{\phi} L}{\lambda M_{\phi}} \phi(z) \right)
          - 
          \zeta(z) 
          + 
          \zeta(x) 
          -
          \left( \sum_{\phi \in A[i]} \frac{s_{\phi} L}{\lambda M_{\phi}} \phi(x) \right)
        \right)
        ,\\
        \exp\left(
          \left( \sum_{\phi \in A[i]} \frac{s_{\phi} L}{\lambda M_{\phi}} \phi(z) \right)
          - 
          \zeta(z) 
          + 
          \zeta(y) 
          -
          \left( \sum_{\phi \in A[i]} \frac{s_{\phi} L}{\lambda M_{\phi}} \phi(y) \right)
        \right)
      \right)
    }
    =
    \Exv{
      \max\left(
        \exp\left(
          \sum_{\phi \in A[i]} \left( \frac{s_{\phi} L}{\lambda M_{\phi}} - 1 \right) (\phi(z) - \phi(x))
        \right)
        ,
        \exp\left(
          \sum_{\phi \in A[i]} \left( \frac{s_{\phi} L}{\lambda M_{\phi}} - 1 \right) (\phi(z) - \phi(y))
        \right)
      \right)
    }.
  \end{dmath*}
  Since the maximum of two sums is less than the sum of the maximums, it follows that
  \begin{dmath*}
    \Exv{
      \max\left(
        \exp\left( \epsilon_{z(i)} - \zeta(z) + \zeta(x) - \epsilon_{x(i)} \right)
        , 
        \exp\left( \epsilon_{z(i)} - \zeta(z) + \zeta(y) - \epsilon_{y(i)} \right)
      \right)
    }
    =
    \Exv{
      \exp\left(
        \max\left(
          \sum_{\phi \in A[i]} \left( \frac{s_{\phi} L}{\lambda M_{\phi}} - 1 \right) (\phi(z) - \phi(x))
          ,
          \sum_{\phi \in A[i]} \left( \frac{s_{\phi} L}{\lambda M_{\phi}} - 1 \right) (\phi(z) - \phi(y))
        \right)
      \right)
    }
    \le
    \Exv{
      \exp\left(
        \sum_{\phi \in A[i]} 
        \max\left(
          \left( \frac{s_{\phi} L}{\lambda M_{\phi}} - 1 \right) (\phi(z) - \phi(x))
          ,
          \left( \frac{s_{\phi} L}{\lambda M_{\phi}} - 1 \right) (\phi(z) - \phi(y))
        \right)
      \right)
    }
    =
    \Exv{
      \exp\left(
        \sum_{\phi \in A[i]} 
        \left( \frac{s_{\phi} L}{\lambda M_{\phi}} - 1 \right)
        \max\left(
          \phi(z) - \phi(x)
          ,
          \phi(z) - \phi(y)
        \right)
      \right)
    }
    \le
    \Exv{
      \exp\left(
        \sum_{\phi \in A[i]} 
        \left( \frac{s_{\phi} L}{\lambda M_{\phi}} - 1 \right) M_{\phi}
      \right)
    },
  \end{dmath*}
  where this last line follows from the fact that $\phi(z) - \phi(x) \le M_{\phi}$.
  So, by independence of the $s_{\phi}$,
  \begin{dmath*}
    \Exv{
      \max\left(
        \exp\left( \epsilon_{z(i)} - \zeta(z) + \zeta(x) - \epsilon_{x(i)} \right)
        , 
        \exp\left( \epsilon_{z(i)} - \zeta(z) + \zeta(y) - \epsilon_{y(i)} \right)
      \right)
    }
    \le
    \prod_{\phi \in A[i]} 
    \Exv{
      \exp\left(
        \left( \frac{s_{\phi} L}{\lambda M_{\phi}} - 1 \right) M_{\phi}
      \right)
    }
    =
    \prod_{\phi \in A[i]} 
    \exp(-M_\phi)
    \Exv{
      \exp\left(
        \frac{s_{\phi} L}{\lambda}
      \right)
    }.
  \end{dmath*}
  This last expression is just the moment generating function of the Poisson random variable, evaluated at $t = \frac{L}{\lambda}$.
  Since $s_{\phi}$ has parameter $\frac{\lambda M_{\phi}}{L}$, it follows from known properties of the Poisson distribution that
  \[
    \Exv{
      \exp\left(
        \frac{s_{\phi} L}{\lambda}
      \right)
    }
    =
    \exp\left(
      \frac{\lambda M_{\phi}}{L}
      \left(
        \exp\left( \frac{L}{\lambda} \right) - 1
      \right)
    \right).
  \]
  Multiplying both sides by $\exp(-M_{\phi})$,
  \[
    \exp(-M_\phi)
    \Exv{
      \exp\left(
        \frac{s_{\phi} L}{\lambda}
      \right)
    }
    =
    \exp\left(
      M_{\phi}
      \left(
        \frac{\lambda}{L}
        \left(
          \exp\left( \frac{L}{\lambda} \right) - 1
        \right)
        -
        1
      \right)
    \right).
  \]
  And substituing this back into our previous expression,
  \begin{dmath*}
    \Exv{
      \max\left(
        \exp\left( \epsilon_{z(i)} - \zeta(z) + \zeta(x) - \epsilon_{x(i)} \right)
        , 
        \exp\left( \epsilon_{z(i)} - \zeta(z) + \zeta(y) - \epsilon_{y(i)} \right)
      \right)
    }
    \le
    \prod_{\phi \in A[i]} 
    \exp\left(
      M_{\phi}
      \left(
        \frac{\lambda}{L}
        \left(
          \exp\left( \frac{L}{\lambda} \right) - 1
        \right)
        -
        1
      \right)
    \right)
    =
    \exp\left(
      \sum_{\phi \in A[i]} 
      M_{\phi}
      \left(
        \frac{\lambda}{L}
        \left(
          \exp\left( \frac{L}{\lambda} \right) - 1
        \right)
        -
        1
      \right)
    \right)
    =
    \exp\left(
      L
      \left(
        \frac{\lambda}{L}
        \left(
          \exp\left( \frac{L}{\lambda} \right) - 1
        \right)
        -
        1
      \right)
    \right)
    =
    \exp\left(
      \lambda
      \left(
        \exp\left( \frac{L}{\lambda} \right) - 1
      \right)
      -
      L
    \right).
  \end{dmath*}
  As long as $z \le 1$, we know that $\exp(z) - 1 \le z + z^2$.
  So, as long as $L \le \lambda$, it follows that
  \begin{dmath*}
    \Exv{
      \max\left(
        \exp\left( \epsilon_{z(i)} - \zeta(z) + \zeta(x) - \epsilon_{x(i)} \right)
        , 
        \exp\left( \epsilon_{z(i)} - \zeta(z) + \zeta(y) - \epsilon_{y(i)} \right)
      \right)
    }
    \le
    \exp\left(
      \lambda
      \left(
        \frac{L}{\lambda} + \left( \frac{L}{\lambda} \right)^2
      \right)
      -
      L
    \right)
    =
    \exp\left(
      \frac{L^2}{\lambda}
    \right).
  \end{dmath*}
  Substituting this back into our previous expression above, we have that
  \begin{dmath*}
    \frac{
      \pi(x) \bar T(x,y)
    }{
      \pi(x) T(x, y)
    }
    \ge
    \frac{
      \sum_{w = 1}^D \exp\left(\zeta(y_{i \rightarrow w}) \right)
    }{
      \sum_{w=1}^D
      \exp\left(\zeta(y_{i \rightarrow w}) \right)
      \Exv{
        \max\left(
          \exp\left( \epsilon_w - \zeta(y_{i \rightarrow w}) + \zeta(x) - \epsilon_{x(i)} \right)
          , 
          \exp\left( \epsilon_w - \zeta(y_{i \rightarrow w}) + \zeta(y) - \epsilon_{y(i)} \right)
        \right)
      }
    }
    \ge
    \frac{
      \sum_{w = 1}^D \exp\left(\zeta(y_{i \rightarrow w}) \right)
    }{
      \sum_{w=1}^D
      \exp\left(\zeta(y_{i \rightarrow w}) \right)
      \cdot
      \exp\left(
        \frac{L^2}{\lambda}
      \right)
    }
    =
    \exp\left(
      -\frac{L^2}{\lambda}
    \right).
  \end{dmath*}
  Thus, it follows that
  \[
    \frac{
      \pi(x) \bar T(x,y)
    }{
      \pi(x) T(x, y)
    }
    \ge
    \exp\left(
      -\frac{L^2}{\lambda}
    \right).
  \]
  Now, we finally use the Dirichlet forms.
  By definition,
  \begin{dmath*}
    \bar \gamma
    =
    \min_f
    \frac{
      \bar{\mathcal{E}}(f)
    }{
      \Var[\pi]{f}
    }
    =
    \min_f
    \frac{
      1
    }{
      \Var[\pi]{f}
    }
    \sum_{x,y} \pi(x) \bar T(x, y)
    \ge
    \exp\left(
      -\frac{L^2}{\lambda}
    \right)
    \cdot
    \min_f
    \frac{
      1
    }{
      \Var[\pi]{f}
    }
    \sum_{x,y} \pi(x) T(x, y)
    =
    \exp\left(
      -\frac{L^2}{\lambda}
    \right)
    \cdot
    \min_f
    \frac{
      \mathcal{E}(f)
    }{
      \Var[\pi]{f}
    }
    =
    \exp\left(
      -\frac{L^2}{\lambda}
    \right)
    \cdot
    \gamma.
  \end{dmath*}
  This is what we wanted to prove.
\end{proof}

\begin{proof}[Proof of Theorem~\ref{thmDoubleMINTreversible}]
The probability of transitioning from $(x, \xi_x)$ to $(y, \xi_y)$ for two distinct states $x$ and $y$ which differ only in variable $i$ will be the probability that: we decide to re-sample variable $i$, we choose $y$ as our proposal, we sample $\xi_y$ as the energy for state $y$, and we accept the proposed change.
We can write this transition probability as
\begin{dmath*}
  T((x, \xi_x), (y, \xi_y))
  =
  \frac{1}{n}
  \Exv{
    \psi(y)
    \cdot
    \mu_y(\xi_y)
    \cdot
    \min(a, 1)
  },
\end{dmath*}
where here the expected value is taken over the random minibatch coefficients $s_\phi$.
We can expand this out to
\begin{dmath*}
  T((x, \xi_x), (y, \xi_y))
  =
  \frac{1}{n}
  \Exv{
    \frac{\exp(\epsilon_{y(i)})}{\sum_{u=1}^D \exp(\epsilon_u)}
    \cdot
    \mu_y(\xi_y)
    \cdot
    \min\left(
      \frac{
        \exp(\xi_y)
      }{
        \exp(\xi_x)
      }
      \cdot
      \frac{
        \exp\left(
          \epsilon_{x(i)}
        \right)
      }{
        \exp\left(
          \epsilon_{y(i)}
        \right)
      }
    , 1 \right)
  }.
\end{dmath*}
If we define stationary distribution
\[
  \pi(x, \xi)
  =
  \frac{1}{Z}
  \mu_x(\xi) \cdot \exp(\xi), 
\]
then
\begin{dmath*}
  \pi(x, \xi_x)
  \cdot
  T((x, \xi_x), (y, \xi_y))
  =
  \frac{1}{n Z}
  \Exv{
    \mu_x(\xi_x) \cdot \exp(\xi_x)
    \cdot
    \frac{\exp(\epsilon_{y(i)})}{\sum_{u=1}^D \exp(\epsilon_u)}
    \cdot
    \mu_y(\xi_y)
    \cdot
    \min\left(
      \frac{
        \exp(\xi_y)
      }{
        \exp(\xi_x)
      }
      \cdot
      \frac{
        \exp\left(
          \epsilon_{x(i)}
        \right)
      }{
        \exp\left(
          \epsilon_{y(i)}
        \right)
      }
    , 1 \right)
  }
  =
  \frac{1}{n Z}
  \Exv{
    \frac{
      \mu_x(\xi_x)
      \cdot
      \mu_y(\xi_y)
    }{\sum_{u=1}^D \exp(\epsilon_u)}
    \cdot
    \min\left(
      \exp(\xi_y)
      \cdot
      \exp(\epsilon_{x(i)})
    , 
      \exp(\xi_x)
      \cdot
      \exp(\epsilon_{y(i)})
    \right)
  }.
\end{dmath*}
This expression is clearly symmetric in $(x, \xi_x)$ and $(y, \xi_y)$, so it follows that the chain is reversible with stationary distribution $\pi$ as defined above.
This is what we wanted to prove.
\end{proof}

\begin{proof}[Proof of Theorem~\ref{thmDoubleMINTgap}]
As with the analysis of MIN-Gibbs, we will prove this result using the technique of Dirichlet forms.
From the result of Theorem~\ref{thmDoubleMINTreversible}, we know that for some $\bar Z$, the stationary distribution of DoubleMIN-Gibbs is
\[
  \bar \pi(x, \xi) = \frac{1}{\bar Z} \mu_x(\xi) \cdot \exp(\xi).
\]
We also know from that same theorem that the transition probability matrix can be written as
\begin{dmath*}
  \bar \pi(x, \xi_x)
  \cdot
  \bar T((x, \xi_x), (y, \xi_y))
  =
  \frac{1}{n \bar Z}
  \Exv{
    \frac{
      \mu_x(\xi_x)
      \cdot
      \mu_y(\xi_y)
    }{\sum_{u=1}^D \exp(\epsilon_u)}
    \cdot
    \min\left(
      \exp(\xi_y)
      \cdot
      \exp(\epsilon_{x(i)})
    , 
      \exp(\xi_x)
      \cdot
      \exp(\epsilon_{y(i)})
    \right)
  }.
\end{dmath*}
It follows from the same analysis as in the proof of Theorem~\ref{thmMINTreversible} that the Dirichlet form of DoubleMIN-Gibbs is
\begin{dmath*}
  \bar{\mathcal{E}}(f)
  =
  \frac{1}{2 n \bar Z}
  \sum_{(x, y) \in Q}
  \Exv{
    \left(f(x, \xi_x) - f(y, \xi_y) \right)^2
    \cdot
    \frac{
      \min\left(
        \exp(\xi_y)
        \cdot
        \exp(\epsilon_{x(i)})
      , 
        \exp(\xi_x)
        \cdot
        \exp(\epsilon_{y(i)})
      \right)
    }{
      \sum_{u=1}^D \exp(\epsilon_u)
    }
  },
\end{dmath*}
where $Q \subset \Omega \times \Omega$ as before is the set of pairs of stats which differ only in a single variable.
Here the expected value is also taken over random variables $\xi_x$ and $\xi_y$, which we suppose are sampled independently from $\mu_x$ and $\mu_y$, respectively.
By a similar argument, we can also determine that the MGPMH chain will have Dirichlet form
\begin{dmath*}
  \bar{\mathcal{E}}(g)
  =
  \frac{1}{2 n \bar Z}
  \sum_{(x, y) \in Q}
  \Exv{
    \left(g(x) - g(y) \right)^2
    \cdot
    \frac{
      \min\left(
        \exp(\zeta(y))
        \cdot
        \exp(\epsilon_{x(i)})
      , 
        \exp(\zeta(x))
        \cdot
        \exp(\epsilon_{y(i)})
      \right)
    }{
      \sum_{u=1}^D \exp(\epsilon_u)
    }
  }.
\end{dmath*}
Now, we know by the condition of the theorem that
\[
  \Abs{\xi_x - \zeta(x)} \le \delta.
\]
Therefore, we can bound the Dirichlet form of DoubleMIN-Gibbs from below with
\begin{dmath*}
  \bar{\mathcal{E}}(f)
  \ge
  \frac{1}{2 n \bar Z}
  \sum_{(x, y) \in Q}
  \Exv{
    \left(f(x, \xi_x) - f(y, \xi_y) \right)^2
    \cdot
    \frac{
      \min\left(
        \exp(\zeta(y) - \delta)
        \cdot
        \exp(\epsilon_{x(i)})
      , 
        \exp(\zeta(x) - \delta)
        \cdot
        \exp(\epsilon_{y(i)})
      \right)
    }{
      \sum_{u=1}^D \exp(\epsilon_u)
    }
  }
  =
  \frac{\exp(-\delta)}{2 n \bar Z}
  \sum_{(x, y) \in Q}
  \Exv{
    \left(f(x, \xi_x) - f(y, \xi_y) \right)^2
    \cdot
    \frac{
      \min\left(
        \exp(\zeta(y))
        \cdot
        \exp(\epsilon_{x(i)})
      , 
        \exp(\zeta(x))
        \cdot
        \exp(\epsilon_{y(i)})
      \right)
    }{
      \sum_{u=1}^D \exp(\epsilon_u)
    }
  }
  =
  \frac{\exp(-\delta)}{2 n \bar Z}
  \sum_{(x, y) \in Q}
  \Exv{
    \left(f(x, \xi_x) - f(y, \xi_y) \right)^2
  }
  \Exv{
    \frac{
      \min\left(
        \exp(\zeta(y))
        \cdot
        \exp(\epsilon_{x(i)})
      , 
        \exp(\zeta(x))
        \cdot
        \exp(\epsilon_{y(i)})
      \right)
    }{
      \sum_{u=1}^D \exp(\epsilon_u)
    }
  },
\end{dmath*}
where we can separate out the expected values like this because $\xi_x$ and $\xi_y$ are sampled independently from the other random variables $s_\phi$.
On the other hand, the variance form associated with the DoubleMIN-Gibbs chain is
\begin{dmath*}
  \Var[\bar \pi]{f}
  =
  \frac{1}{2 \bar Z^2} \sum_{x \in \Omega} \sum_{y \in \Omega}
  \Exv{
    \left(f(x, \xi_x) - f(y, \xi_y) \right)^2
    \cdot
    \exp(\xi_x) \cdot \exp(\xi_y)
  }
  \le
  \frac{1}{2 \bar Z^2} \sum_{x \in \Omega} \sum_{y \in \Omega}
  \Exv{
    \left(f(x, \xi_x) - f(y, \xi_y) \right)^2
    \cdot
    \exp(\zeta(x) + \delta) \cdot \exp(\zeta(y) + \delta)
  }
  =
  \frac{\exp(2 \delta)}{2 \bar Z^2} \sum_{x \in \Omega} \sum_{y \in \Omega}
  \exp(\zeta(x)) \cdot \exp(\zeta(y))
  \Exv{
    \left(f(x, \xi_x) - f(y, \xi_y) \right)^2
  }.
\end{dmath*}
From here, we can write
\begin{dmath*}
  \bar \gamma
  =
  \min_f
  \frac{
    \bar{\mathcal{E}}(f)
  }{
    \Var[\bar \pi]{f}
  }
  \ge
  \min_f
  \frac{
    \frac{\exp(-\delta)}{2 n \bar Z}
    \sum_{(x, y) \in Q}
    \Exv{
      \frac{
        \min\left(
          \exp(\zeta(y))
          \cdot
          \exp(\epsilon_{x(i)})
        , 
          \exp(\zeta(x))
          \cdot
          \exp(\epsilon_{y(i)})
        \right)
      }{
        \sum_{u=1}^D \exp(\epsilon_u)
      }
    }
    \Exv{
      \left(f(x, \xi_x) - f(y, \xi_y) \right)^2
    }
  }{
    \frac{\exp(2 \delta)}{2 \bar Z^2} \sum_{x \in \Omega} \sum_{y \in \Omega}
    \exp(\zeta(x)) \cdot \exp(\zeta(y))
    \Exv{
      \left(f(x, \xi_x) - f(y, \xi_y) \right)^2
    }
  }
  =
  \frac{\exp(-3 \delta) \bar Z}{n}
  \min_f
  \frac{
    \sum_{(x, y) \in Q}
    \Exv{
      \frac{
        \min\left(
          \exp(\zeta(y))
          \cdot
          \exp(\epsilon_{x(i)})
        , 
          \exp(\zeta(x))
          \cdot
          \exp(\epsilon_{y(i)})
        \right)
      }{
        \sum_{u=1}^D \exp(\epsilon_u)
      }
    }
    \Exv{
      \left(f(x, \xi_x) - f(y, \xi_y) \right)^2
    }
  }{
    \sum_{x \in \Omega} \sum_{y \in \Omega}
    \exp(\zeta(x)) \cdot \exp(\zeta(y))
    \Exv{
      \left(f(x, \xi_x) - f(y, \xi_y) \right)^2
    }
  }.
\end{dmath*}
As before, using the fact that for positive numbers $a_1, a_2, \ldots, a_N$ and $b_1, b_2, \ldots, b_N$,
\[
  \frac{\sum_{i=1}^n a_i}{\sum_{i=1}^n b_i} \ge \min_i \frac{a_i}{b_i}.
\]
we can bound this from below for some fixed $\xi_x$ and $\xi_y$ (which may still be a function of $f$) with
\begin{dmath*}
  \bar \gamma
  \ge
  \frac{\exp(-3 \delta) \bar Z}{n}
  \min_f
  \frac{
    \sum_{(x, y) \in Q}
    \Exv{
      \frac{
        \min\left(
          \exp(\zeta(y))
          \cdot
          \exp(\epsilon_{x(i)})
        , 
          \exp(\zeta(x))
          \cdot
          \exp(\epsilon_{y(i)})
        \right)
      }{
        \sum_{u=1}^D \exp(\epsilon_u)
      }
    }
    \left(f(x, \xi_x) - f(y, \xi_y) \right)^2
  }{
    \sum_{x \in \Omega} \sum_{y \in \Omega}
    \exp(\zeta(x)) \cdot \exp(\zeta(y))
    \left(f(x, \xi_x) - f(y, \xi_y) \right)^2
  }
  \ge
  \frac{\exp(-3 \delta) \bar Z}{n}
  \min_g
  \frac{
    \sum_{(x, y) \in Q}
    \Exv{
      \frac{
        \min\left(
          \exp(\zeta(y))
          \cdot
          \exp(\epsilon_{x(i)})
        , 
          \exp(\zeta(x))
          \cdot
          \exp(\epsilon_{y(i)})
        \right)
      }{
        \sum_{u=1}^D \exp(\epsilon_u)
      }
    }
    \left(g(x) - g(y) \right)^2
  }{
    \sum_{x \in \Omega} \sum_{y \in \Omega}
    \exp(\zeta(x)) \cdot \exp(\zeta(y))
    \left(g(x) - g(y) \right)^2
  },
\end{dmath*}
where this last inequality holds because we can always set
\[
  g(x) = f(x, \xi_x).
\]
Finally, we can rewrite this as
\begin{dmath*}
  \bar \gamma
  \ge
  \frac{\exp(-3 \delta) \bar Z}{Z}
  \min_g
  \frac{
    \frac{1}{2 n Z}
    \sum_{(x, y) \in Q}
    \Exv{
      \frac{
        \min\left(
          \exp(\zeta(y))
          \cdot
          \exp(\epsilon_{x(i)})
        , 
          \exp(\zeta(x))
          \cdot
          \exp(\epsilon_{y(i)})
        \right)
      }{
        \sum_{u=1}^D \exp(\epsilon_u)
      }
    }
    \left(g(x) - g(y) \right)^2
  }{
    \frac{1}{2 Z^2}
    \sum_{x \in \Omega} \sum_{y \in \Omega}
    \exp(\zeta(x)) \cdot \exp(\zeta(y))
    \left(g(x) - g(y) \right)^2
  }
  =
  \frac{\exp(-3 \delta) \bar Z}{Z}
  \min_g
  \frac{
    \mathcal{E}(g)
  }{
    \Var[\pi]{g}
  }
  =
  \frac{\exp(-3 \delta) \bar Z}{Z}
  \gamma.
\end{dmath*}
All that remains is to bound the ratio of the $Z$ and $\bar Z$.
But this ratio is the same as it is in the proof of Theorem~\ref{thmMINTgap}, so we can conclude that
\[
  \frac{\bar Z}{Z}
  \ge
  \exp(-\delta).
\]
So,
\[
  \bar \gamma \ge \exp(-4 \delta) \gamma.
\]
This is what we wanted to show.
\end{proof}

\section{Experiments}

In this section, we describe the methodology of our experiments in more detail.
Our experiments are run on a synthetic Ising model with energy
\[
  \zeta_{\text{Ising}}(x) = \sum_{i = 1}^N \sum_{j = 1}^N \beta \cdot A_{ij} \cdot (x(i) x(j) + 1),
\]
and a synthetic Potts model with energy
\[
  \zeta_{\text{Potts}} = \sum_{i = 1}^N \sum_{j = 1}^N \beta \cdot A_{ij} \cdot \delta(x(i), x(j)).
\]
As is usually the case with these models, we laid out our variables on a grid, a $20 \times 20$ grid to be precise.
This resulted in $n = 400$ variables for both models.
Our goal here was to construct a dense synthetic model with a nontrivial interaction matrix (i.e. non-constant $A_{ij}$).
To do this, we assigned $A_{ij}$ (for $i \ne j$) according to the Gaussian kernel
\[
  A_{ij} = \exp\left(-\gamma d_{ij}^2 \right)
\]
where $d_{ij}$ is the distance between variables $i$ and $j$ in the $20 \times 20$ grid.
Equivalently, if $x_i \in \{1, \ldots, 20\} \times \{1, \ldots, 20\}$ is the position of variable $i$ in the grid, then
\[
  A_{ij} = \exp\left(-\gamma \|\mathbf x_i - \mathbf x_j\|^2 \right).
\]
This form may be more easily recognizable as a Gaussian RBF kernel.
We chose to set $\gamma = 1.5$ for both the Ising and Potts models.
For all experiments, we ran $1000000 = 10^6$ iterations of sampling.
For the Ising model, we set the inverse temperature $\beta = 1.0$, and for the Potts model, we set $\beta = 4.6$.
In both cases, we hand-tuned $\beta$ so that it was small enough that the marginal error of vanilla Gibbs sampling would clearly be observed to converge within $10^6$ iterations, but large enough that its convergence trajectory would be clearly distinct from the trivial convergence trajectory when $\beta = 0$.
In other words, we set $\beta$ so that, for plain Gibbs sampling, we would both observe non-trivial behavior and have a chain that converged fast enough to efficiently simulate.
We then used this value of $\beta$ to evaluate our algorithms.

\end{document}